\documentclass{article} % For LaTeX2e
\usepackage[preprint]{colm2026_conference}

\usepackage{microtype}
\usepackage{graphicx}
\usepackage{subcaption}
\usepackage{booktabs}
\usepackage{hyperref}
\usepackage{amsmath}
\usepackage{amssymb}
\usepackage{mathtools}
\usepackage{amsthm}
\usepackage{algorithm}
\usepackage{algorithmic}
\usepackage[capitalize,noabbrev]{cleveref}
\usepackage{multirow}

\theoremstyle{plain}
\newtheorem{theorem}{Theorem}[section]

\newtheorem{lemma}[theorem]{Lemma}

\theoremstyle{definition}

\newcommand{\E}{\mathbb{E}}

\newcommand{\indicator}{\mathbf{1}}
\newcommand{\positives}{\mathcal{S}^{+}}
\newcommand{\negatives}{\mathcal{S}^{-}}
\newcommand{\dataset}{\mathcal{D}}
\newcommand{\loss}{\mathcal{L}}

\usepackage{tcolorbox}

\newtcolorbox{questionbox}[1][]{
    colback=blue!3!white,
    colframe=black,
    fonttitle=\bfseries\small,
    title={#1},
    boxrule=0.5pt,
    left=6pt, right=6pt, top=4pt, bottom=4pt
}

\newtcolorbox{responsebox}[1][]{
    colback=white,
    colframe=black!30,
    boxrule=0.4pt,
    left=6pt, right=6pt, top=4pt, bottom=4pt
}

\usepackage{lineno}

\definecolor{darkblue}{rgb}{0, 0, 0.5}
\hypersetup{colorlinks=true, citecolor=darkblue, linkcolor=darkblue, urlcolor=darkblue}

\title{Beyond Negative Rollouts: Positive-Only Policy Optimization with Implicit Negative Gradients}

\author{\textbf{Mingwei Xu}$^{1,~\dagger}$, \textbf{Hao Fang}$^{1,~\dagger, *}$\\
$^1$ University of Washington, Seattle, USA \\
$^{~\dagger}$ equally contributed \\
$^{*}$ Authors to whom any correspondence should be addressed
}

\begin{document}

\ifcolmsubmission
\linenumbers
\fi

\maketitle

\begin{abstract}
Reinforcement learning with verifiable rewards (RLVR), due to the deterministic verification, becomes a dominant paradigm for enhancing the reasoning ability of large language models (LLMs). The community witnesses the rapid change from the Proximal Policy Optimization (PPO) to Group Relative Policy Optimization (GRPO), in which GRPO reduces the complicated advantage estimation with simple estimation over grouped positive and negative rollouts. However, we note that negative rollouts may admit no gradation of failure severity, and the combinatorial vastness makes penalizing a few sampled negatives unlikely to cover a meaningful reward signal under sparse binary rewards. In this work, we propose Positive-Only Policy Optimization (POPO), a novel RLVR framework in which learning can occur exclusively via online positive rollouts. Specifically, POPO utilizes bounded importance sampling over the positive rollout set. Thus, no disjoint negative rollouts are used for the gradient guidance during the policy post-training. We show that implicit negative gradients can emerge naturally through reinforcing the positive probability via rollouts redistribution. Next, POPO stabilizes the policy optimization through two mechanisms. First, it applies a siamese policy network with a momentum-based adaptation law for stabilized policy evolution. Second, we replace the KL-divergence with a bounded similarity penalty term in the siamese representation space. We conduct extensive experiments using publicly available, well-established text-LLM models, e.g., the Qwen family, across all-level mathematical benchmarks (MATH-500, AMC23, AIME 2024/2025, and Olympiad). Our experiment demonstrates that POPO achieves performance comparable to, or even superior to GRPO. Notably, we show that POPO can achieve 36.67$\%$ in AIME 2025 with Qwen-Math-7B, outperforming GRPO 30.00$\%$. Our ablation and sweep studies further illustrate the necessity and robustness of POPO components. Our results suggest a new RLVR direction without negative rollouts.
\end{abstract}

% ============================================
% INTRODUCTION
% ============================================
\section{Introduction}
\label{sec:introduction}
\begin{figure}[h]
    \centering
    \includegraphics[width=1.0\textwidth]{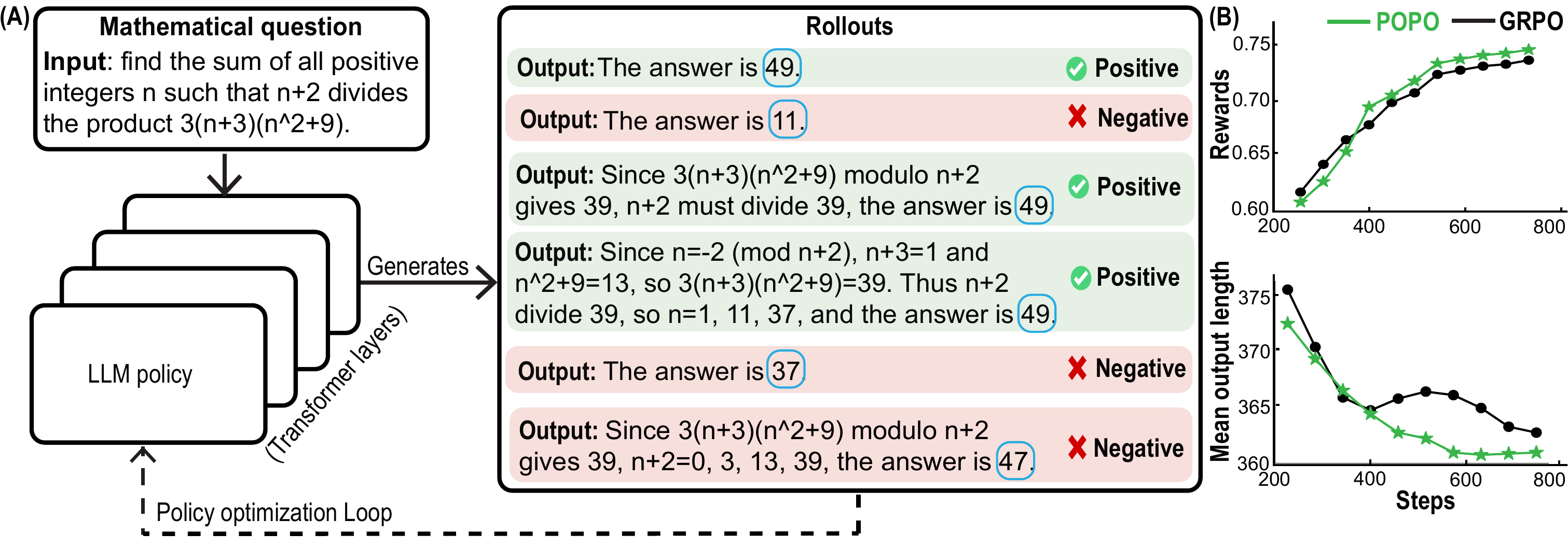}
    \caption{(A) Scheme of RLVR for mathematical question-solving tasks. A general on-policy RL algorithm will reinforce the positive rollouts, while penalizing the negative rollouts. However, the negative rollouts may not contain a meaningful reasoning chain-of-thought in addressing the graduate-level mathematical questions, which motivates the POPO algorithm design. (B) A comparison of POPO and GRPO on the reward trace (upper panel) and mean output completion length (lower panel) on a simple GSM8K dataset.}
    \label{fig: figure 1}
\end{figure}

The emergence of reasoning capabilities in large language models (LLMs) through reinforcement learning (RL) makes a significant advance in the capability enhancement and human feedback (RLHF) alignment~\citep{ouyang2022training}. Well-known work, such as DeepSeek-R1~\citep{guo2025deepseek} and OpenAI's o1 series~\citep{jaech2024openai}, demonstrates that LLMs can surge intelligent chain-of-thought (COT)~\citep{wei2022chain} reasoning ability through RL with Verifiable Rewards (RLVR)~\citep{lambert2024tulu}. In general, RLVR works by giving a sampled question $x$ and the corresponding LLM-generated response $y$ containing reasoning steps and a final answer, a verifier provides deterministic feedback such as $R(x, y) = \indicator[\texttt{verify}(y) = \texttt{True}]$~\citep{cobbe2021training}. The simplicity of the verification mechanism, together with various RL algorithms, has achieved substantial improvements in mathematical problem-solving and coding tasks~\citep{le2022coderl}, where LLMs show a strong ability of self-reflection and generating sophisticated reasoning chains.

Despite these successes, current RLVR approaches face a fundamental challenge: the treatment of incorrect responses. Take the mathematical reasoning task, for example. At the cold-start stage, the vast majority of sampled responses are incorrect, e.g., typical pass@k rates are usually below 50\% for the AMC 23 dataset, which happens due to diverse situations such as computational errors, problem misinterpretations, and inappropriate solution strategies (see figure~\ref{fig: figure 1}). Thus, we begin to reconsider \textit{do incorrect responses provide meaningful learning signal under the sparse reward signal setting?} and thus ask the question: \textit{can a policy improve by reinforcing only from its successes?} This seems counterintuitive at first glance, given the conventions in RL, which reinforce the policy with the good reward while penalizing it for the bad rewards. However, a few insightful observations motivate it. First, like GRPO~\citep{shao2024deepseekmath}, all correct responses share the same advantage over all incorrect ones. Thus, there is no gradation of ``how good or how wrong'' a response is. Second, the space of incorrect responses is combinatorially vast~\citep{yue2025does} for mathematical tasks, making it unlikely that penalizing a few sampled failures meaningfully covers the failure modes. Third, past work in contrastive self-supervised learning (SSL) demonstrates that effective representations are feasible using only positive samples, works such as "Bootstrap your own latent"(BYOL)~\citep{grill2020bootstrap} and "Exploring simple siamese representation learning" (SimSiam)~\citep{chen2021exploring}, achieving powerful latent representations.

Inspired by the above, we propose \textbf{Positive-Only Policy Optimization (POPO)}, a new on-policy optimization approach for RLVR that learns exclusively on positive reward responses. Specifically, POPO has the following contributions:
\begin{itemize}
\item POPO introduces bounded importance sampling weights, which normalize over the positive rollout set, to preferentially reinforce positive responses, while introducing implicit negative penalties.

\item POPO uses an exponential moving average (EMA) to update the siamese policy network as an adaptive policy anchor, combining the benefits from SSL insights.

\item POPO replaces KL divergence with representation-space similarity, providing a soft bounded constraint that preserves semantic structure in the predictor space.
\end{itemize}

To evaluate the POPO algorithm, we use the DeepScaleR-Preview-Dataset for policy training. Then, we conduct extensive test experiments using publicly available, well-established LLM models, including the Qwen2.5 Math family, DeepSeek-R1 distilled models, and general text-LLM models, across all-level mathematical reasoning benchmarks (MATH-500, AMC23, AIME 2024, AIME 2025, and Olympiad). Our experiment demonstrates that POPO still achieves comparable or even superior performance compared with GRPO and its variants. Especially, in AIME 2025,  we show that POPO can achieve 36.67$\%$ using Qwen-Math-7B, outperforming GRPO 30.00$\%$. Last, our ablation and sweep studies illustrate the necessity and robustness of POPO components and hyperparameters. Our results have implications for future sparse RLVR beyond negative rollouts.
% ============================================
% Related Work
% ============================================
\section{Related Work}
\label{sec:related work}

\textbf{RLHF and RLVR.} RLHF addresses a critical limitation that the next-token prediction objective optimized during pretraining and supervised finetuning (SFT) phases does not directly capture the intentions of human users. RLHF aligns LLM behavior to be close to human intent by training a reward model on human preference and optimizing the policy via RL~\citep{christiano2017deep,stiennon2020learning}. However, learning reward modeling is an imperfect proxy for human intent and thus may be sensitive to the reward hacking issue ~\citep{gao2023scaling}. Offline preference learning, which removes the reward modeling by purely using the contrastive preference, also has widespread usage. For example, Direct Preference Optimization (DPO)~\citep{rafailov2023direct} has been applied for aligning the chatbot output to user intent~\citep{tunstall2023zephyr}. On the other hand, RLVR replaces the learnable reward modeling process with deterministic verification pipelines, where correctness can be easily assessed in domains such as mathematical reasoning and code generation~\citep{cobbe2021training,zelikman2022star}. Recent technical reports~\citep{jaech2024openai,guo2025deepseek} demonstrate that simple RLVR elicits sophisticated COT. 

\textbf{Policy Optimization Methods for RLVR.} Following the well-established adoption in GRPO~\citep{shao2024deepseekmath}, it has become a widely used approach by removing the value network and estimating advantages through group-level reward normalization. A series of GRPO extensions have been proposed~\citep{wan2026buffer,zheng2025group,zhang2025gvpo,yue2025vapo,chen2025minimax,liu2026gdpo,xi2025bapo,hong2024orpo,meng2024simpo,cui2026clipo,yang2026learning}. Among them, Dr.GRPO~\citep{liu2025understanding} removes per-response length and standard-deviation normalization to correct an optimization bias that artificially inflates advantage estimation. Dynamic sAmpling Policy Optimization (DAPO)~\citep{yu2025dapo} introduces decoupled clipping, dynamic sampling that filters zero-advantage groups, and token-level policy gradients. Soft Adaptive Policy Optimization (SAPO)~\citep{gao2025soft} replaces hard clipping with a smooth temperature-controlled gate to preserve near-on-policy learning signals while attenuating off-policy outliers. However, all methods remain to learn from positive and negative rollouts.

\textbf{Learning without Negative Samples.} Learning without negative samples (or exclusively from positive examples) undergoes a long debate in the area of contrastive representation learning~\citep{liu2021self}. The early work in self-supervised learning, such as SimCLR~\citep{chen2020simple} and MoCo~\citep{he2020momentum}, needs to rely on explicit negative samples. BYOL~\citep{grill2020bootstrap} changed this paradigm by demonstrating that representations can be learned with only positive samples through an asymmetric online-target architecture. Meanwhile, SimSiam~\citep{chen2021exploring} also illustrated that a Siamese network with only a stop-gradient operation can enable meaningful latent representation learning. Recently, in the post-training of LLM, SFT works by training the alignment with giving prompts and expected solutions. Thus, it can be treated as a naive way of learning without negative samples. However, SFT-based approaches only optimize a fixed cross-entropy objective, which clearly suffers from the limited exploration abilities. Meanwhile, SFT can also lead to distribution mismatch as the policy may evolve unboundedly, causing the strong over-fitting on the post-training data~\citep{liu2023statistical}. Recent work in Online SFT~\citep{li2025online}, Rejection Sampling Fine-Tuning~\citep{yuan2023scaling}, Self-Taught Reasoner ~\citep{zelikman2022star}, and ReST\textsuperscript{EM}~\citep{singh2023beyond} shows that iterative on-policy SFT supervision progressively improves reasoning capabilities. To this end, our POPO closes the gap by combining the insights from above.

% ===========================================
% METHODS
% ============================================
\section{Methods}
\label{sec:methods}

\subsection{Problem Setting and Preliminary}
\label{subsec:problem and preliminary}

We consider improving an LLM's mathematical reasoning ability through RLVR. Let $\dataset = \{(x_i, a_i^*)\}_{i=1}^N$ denote a set of mathematical problems, where $x_i$ is a problem statement (prompt) and $a_i^*$ is the given ground-truth answer. The policy $\pi_\theta$ stands for an LLM that generates a sequence of tokens (response) $y = (y_1, y_2, \ldots, y_T)$ given a prompt $x$: $\pi_\theta(y \mid x) = \prod_{t=1}^{T} \pi_\theta(y_t \mid x, y_{<t})$, where $y_{<t} = (y_1, \ldots, y_{t-1})$ denotes the response prefix. The reward function provides binary feedback based on answer correctness:
$R(x, y) = \indicator[\texttt{extract}(y) = a^*]$~\citep{cobbe2021training}, where $\texttt{extract}(\cdot)$ parses the final answer from the response. Here, we consider the binary reward is \emph{sparse} (provided only at sequence completion) and \emph{deterministic}. For each prompt $x$, we partition generated response rollouts into two disjoint sets
\begin{equation}
  \positives(x) = \{y : R(x, y) = 1\},  \quad  \negatives(x) = \{y : R(x, y) = 0\}. 
  \label{eq: disjoint set}  
\end{equation}
To maximize the expected reward, a general policy gradient optimization follows:  
\begin{equation}
    \max_\theta \; J(\theta) =  \E_{x \sim \dataset} \E_{y \sim \pi_\theta(\cdot \mid x)} \left[ R(x, y) \right],
    \label{eq:objective}
\end{equation}
\begin{equation}
    \nabla_\theta J(\theta) = \E_{x \sim \dataset, y \sim \pi_\theta(\cdot \mid x)}\left[A^{\pi_\theta}(x, y) \nabla_\theta \log \pi_\theta(y \mid x)\right],
    \label{eq:policy_gradient}
\end{equation}
where $A^{\pi_\theta}(x, y)$ is the estimated advantage function measuring how much better response $y$ is compared to the average responses. equation~\eqref{eq:policy_gradient} increases the probability of responses with positive advantage and decreases the negative ones. Thus, it can be viewed as $ \nabla_\theta J(\theta) = \sum_{i: A_i > 0} A_i \nabla_\theta \log \pi_\theta(y_i \mid x) + \sum_{i: A_i < 0} A_i \nabla_\theta \log \pi_\theta(y_i \mid x)$~\citep{zhu2025surprising}.

\textbf{Unified View of Policy Optimization.}
General policy optimization methods follow:
\begin{equation}
    \loss(\theta) = -\E_{\pi_{\mathrm{anchor}}}\left[w(\theta) \cdot f(A) \cdot \log \pi_\theta(y \mid x)\right] + \lambda \cdot \mathcal{R}(\theta),
    \label{eq:unified_objective}
\end{equation}
where $w(\theta)$ is an importance sampling weight, $f(A)$ transforms the advantage, and $\mathcal{R}(\theta)$ is a regularizer term to  anchor the policy. Different methods instantiate these components differently: PPO uses clipped importance ratios with learned values for advantage estimations, whereas GRPO uses group-normalized rewards with KL divergence regularization.

\subsection{Positive-Only Policy Optimization}
\label{subsec:POPO}

\begin{figure}[t]
    \centering
    \includegraphics[width=0.95\textwidth]{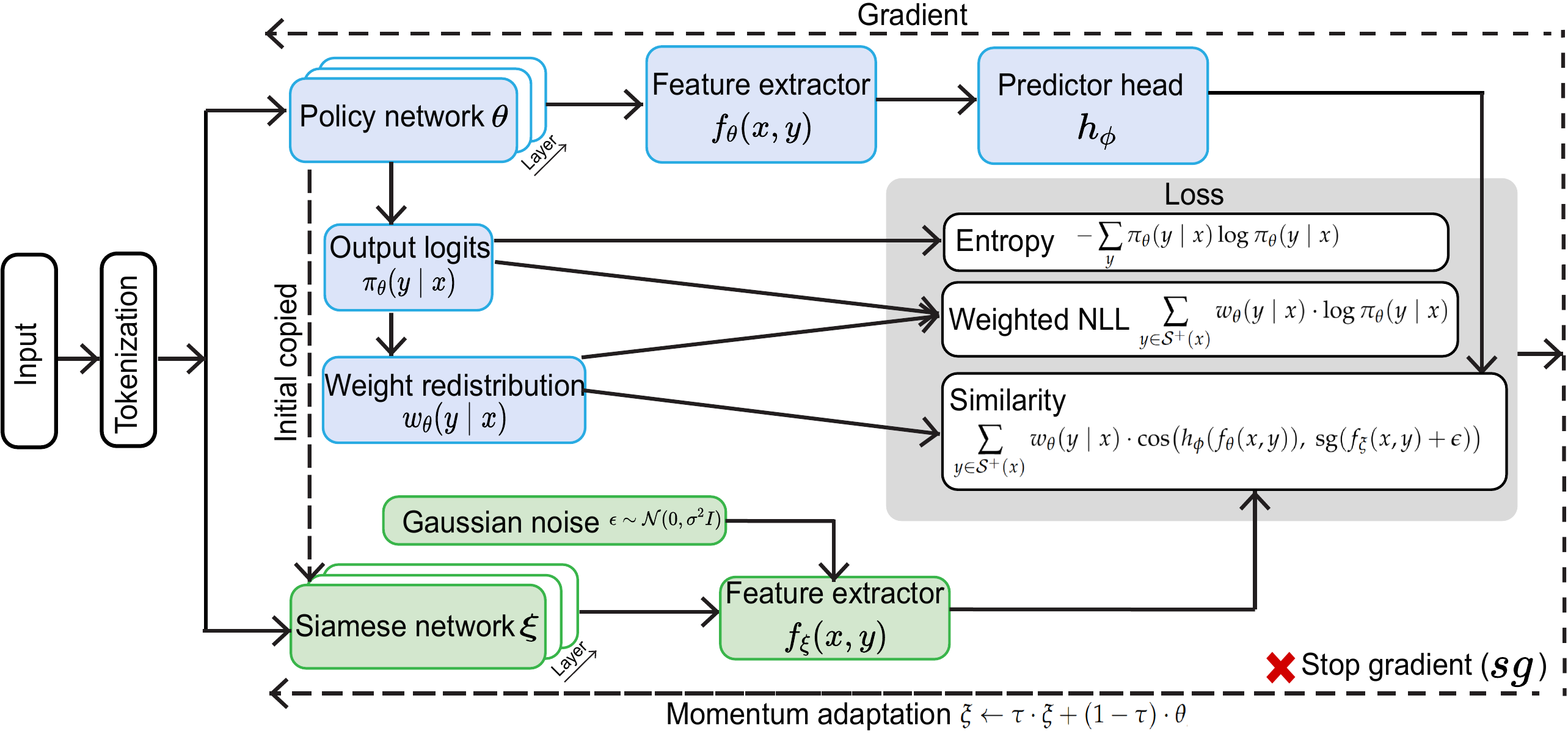}
    \caption{Overview of the POPO components. The policy network (upper part) follows the gradient backpropagation, while the Siamese network (lower part) uses momentum adaptation. Loss is in the grey box. Algorithm~\ref{alg:POPO} shows forward and backward processes.}
    \label{fig: POPO}
\end{figure}

POPO optimizes the policy using only correct responses while maintaining stability through the following mechanisms: self-normalized weight, adaptive anchor, representation-space alignment, and entropy penalty (see figure~\ref{fig: POPO}). The overall POPO loss function is:
\begin{equation}
    \loss_{\text{POPO}}(\theta) = -\E_{x \sim \dataset} \left[ \sum_{y \in \positives(x)} w_\theta(y \mid x) \cdot \log \pi_\theta(y \mid x) \right] + \alpha \loss_{\text{sim}}(\theta, \phi, \xi) + \beta \loss_{\text{ent}}(\theta),
    \label{eq:POPO_loss}
\end{equation}
where $w_\theta(y \mid x)$ is an importance weight over the positive set; $\loss_{\text{sim}}(\theta, \phi, \xi)$ provides representation-space regularization, and $\loss_{\text{ent}}(\theta)$ is an entropy bonus encouraging exploration. The coefficients $\alpha, \beta > 0$ control regularization strength. The key distinction from standard policy gradient methods is that the expectation is taken \emph{only} over responses in $\positives(x)$ (rejected sampling as hard advantages). Although negative responses receive no explicit gradient signal, we show that the implicit negative gradient exists in section~\ref{subsec:implicit}. 

\begin{algorithm}[t]
\caption{Positive-Only Policy Optimization (POPO)}
\label{alg:POPO}
\begin{algorithmic}[1]
\REQUIRE Dataset $\dataset$, initial parameters $\theta_0$, learning rate $\eta$, EMA momentum $\tau$, \\
\hspace{2.8em} similarity weight $\alpha$, entropy weight $\beta$, group size $G$
\STATE Initialize Siamese policy network parameters: $\xi \leftarrow \theta_0$
\FOR{iteration $t = 1, 2, \ldots, T$}
    \STATE Sample batch of math question as prefix prompts $\{x_1, \ldots, x_B\}$ from $\dataset$
    \FOR{each prompt $x_i$ in batch}
        \STATE Generate $G$ responses: $\{y_1, \ldots, y_G\} \sim \pi_\theta(\cdot \mid x_i)$
        \STATE Verify correctness: $R_j = R(x_i, y_j)$ for $j = 1, \ldots, G$
        \STATE Identify positive set: $\positives(x_i) = \{y_j : R_j = 1\}$
    \ENDFOR
    \STATE Collect positive pairs: $\mathcal{B}^+ = \{(x_i, y) : y \in \positives(x_i)\}$
    \IF{$\mathcal{B}^+$ is non-empty}
        \FOR{each prompt $x$ with positives}
            \STATE Compute weights $w_\theta(y \mid x)$ via Equation~\eqref{eq:weight}
        \ENDFOR
        \STATE Compute POPO loss via Equation~\eqref{eq:POPO_loss}
        \STATE Update policy: $\theta \leftarrow \theta - \eta \nabla_\theta \loss_{\text{POPO}}$
    \ENDIF
    \STATE Update via EMA: $\xi \leftarrow \tau \xi + (1 - \tau) \theta$
\ENDFOR
\RETURN Trained policy network $\pi_\theta$
\end{algorithmic}
\end{algorithm}

\textbf{Self-Normalized Weight over Positive Set.}
POPO uses bounded and re-normalized weights over the positive set. Intuitively, this scheme creates a \emph{self-competition} situation, where responses assigned a high probability receive larger weights. Thus, we reinforce the ``confident correct answers'' more strongly while maintaining diversity through normalization. It balances between uniform weighting (when all positives are equal) and concentrated weighting (when one positive dominates).
\begin{equation}
    w_\theta(y \mid x) = \frac{\pi_\theta(y \mid x)}{Z^+(x)}, \quad \text{where} \quad Z^+(x) = \sum_{y' \in \positives(x)} \pi_\theta(y' \mid x).
    \label{eq:weight}
\end{equation}

% ========================================================================================
\textbf{Adaptive Anchor.}
Methods like BYOL~\citep{grill2020bootstrap} and SimSiam~\citep{chen2021exploring} achieve positive examples only through contrastive learning. POPO adopts this innovative design in principle. Basically, we maintain two copies of the policy: the policy network $\pi_\theta$ with parameters $\theta$ that we optimize, and a siamese policy $\pi_\xi$ with parameters $\xi$ that provides a stable anchor. Siamese policy parameters are updated using an exponential moving average (EMA) of the policy network parameter:
\begin{equation}
    \xi \leftarrow \tau \cdot \xi + (1 - \tau) \cdot \theta,
    \label{eq:ema}
\end{equation}
where $\tau \in (0, 1)$ is the momentum coefficient, e.g., $\tau = 0.999$.

\textbf{Representation-Space Alignment.}
With the above Siamese network architecture, we next define how the policy network should align with the Siamese network. Conventional $KL$ divergence in probability space suffers from high variance, token-level sensitivity, and rigid point-wise constraints~\citep{huang2024correcting}. Thus, our POPO aligns the hidden representation within the semantic embedding space.
\begin{equation}
    \loss_{\text{sim}}(\theta, \phi, \xi) = -\E_{x \sim \dataset}\left[\sum_{y \in \positives(x)} w_\theta(y \mid x) \cdot \mathrm{cos}\bigl(h_\phi(f_\theta(x, y)),\; \mathrm{sg}(f_\xi(x, y) + \epsilon)\bigr)\right],
    \label{eq:sim_loss}
\end{equation}
where $f_\theta(x, y)$ and $f_\xi(x, y)$ are representations extracted from the both networks, respectively; $h_\phi$ is a predictor head (implemented as MLPs), $\mathrm{sg}(\cdot)$ denotes stop-gradient, and $\epsilon \sim \mathcal{N}(0, \sigma^2 I)$ is Gaussian noise. Specifically, \textbf{Predictor head.} Predictor $h_\phi$ on the policy network is an asymmetrical design to ensure that the policy network must do ``extra work'' to match the Siamese network, preventing the collapse where both networks converge to outputs; \textbf{Stop-gradient.} The $\mathrm{sg}(\cdot)$ operator ensures gradients flow only through the policy network. The Siamese network is updated solely via EMA; \textbf{Gaussian noise.} Adding small noise $\epsilon$ provides robustness against trivial embeddings.

\textbf{Entropy Regularization.}
Like the recent approach~\citep{schulman2017proximal}, our POPO also includes the entropy loss to ensure diversity:
\begin{equation}
    \loss_{\text{ent}}(\theta) = -\E_{x \sim \dataset} \left[ H(\pi_\theta(\cdot \mid x)) \right]. \ \text{where} \ H(\pi_\theta(\cdot \mid x)) = -\sum_{y} \pi_\theta(y \mid x) \log \pi_\theta(y \mid x).
    \label{eq:entropy}
\end{equation}

% ========================================================================================
\subsection{Learning via Implicit Negative Reward}
\label{subsec:implicit}
We answer the central question: \textit{why POPO is effective without explicit penalties on the negatives?} via the weight probability redistribution, bounded divergence, and asymmetric learning.
 
%  Probability Redistribution + Theorem ---
\paragraph{Implicit Negative Gradients via Weight Probability Redistribution.}
The fundamental constraint $\sum_{y} \pi_\theta(y \mid x) = 1$ implies the intuition that when POPO reinforces $\pi_\theta(y \mid x)$ for $y \in \positives(x)$, softmax normalization implicitly forces $\pi_\theta(y' \mid x)$ to decrease for $y' \in \negatives(x)$. Thus, we analyze the \textit{gradient} of $\loss_{\text{POPO}}$ at the output logit level. Let $z_y$ denote the logit (pre-softmax score) for response $y$, i.e., $\pi_\theta(y \mid x) = \exp(z_y) / \sum_{y''} \exp(z_{y''})$.
 
\begin{theorem}[Implicit Negative Gradient]
\label{thm:implicit_negative}
For any incorrect response $y' \in \negatives(x)$, the gradient of $\loss_{\text{POPO}}$~\eqref{eq:POPO_loss} with respect to the logit $z_{y'}$ satisfies
\begin{equation}
    \frac{\partial \loss_{\text{POPO}}}{\partial z_{y'}} \;=\; \pi_\theta(y' \mid x) \Big[ 1 + \beta \Big(\!\log \pi_\theta(y' \mid x) + H\!\left(\pi_\theta(\cdot \mid x)\right)\!\Big)\Big].
    \label{eq:implicit_grad}
\end{equation}
\end{theorem}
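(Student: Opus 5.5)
The plan is to fix a single prompt $x$ and treat the logits $\{z_y\}_y$ as free variables. Under the softmax parameterization, $\partial \log \pi_\theta(y\mid x)/\partial z_{y'} = \indicator[y=y'] - \pi_\theta(y'\mid x)$. The outer expectation over $x\sim\dataset$ is linear and the logits of different prompts are distinct, so only the summand for this $x$ matters. I will then differentiate the three terms of $\loss_{\text{POPO}}$ in~\eqref{eq:POPO_loss} separately with respect to $z_{y'}$ for a fixed $y'\in\negatives(x)$, and add the results.

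\textbf{Policy term.} First I show that the weights in~\eqref{eq:weight} do not depend on $z_{y'}$. Indeed, $w_\theta(y\mid x)=\exp(z_y)/\sum_{y''\in\positives(x)}\exp(z_{y''})$, because the global softmax denominator cancels. This involves only logits of positive responses, so $\partial w_\theta(y\mid x)/\partial z_{y'}=0$ for every $y\in\positives(x)$ and $y'\in\negatives(x)$. Consequently I do not need to decide whether gradients flow through $w$ or it is detached, since both conventions give the same answer here. The derivative of $-\sum_{y\in\positives(x)} w_\theta(y\mid x)\log\pi_\theta(y\mid x)$ is then
\[
-\sum_{y\in\positives(x)} w_\theta(y\mid x)\bigl(0-\pi_\theta(y'\mid x)\bigr)=\pi_\theta(y'\mid x),
\]
using $y\neq y'$ (the sets in~\eqref{eq: disjoint set} are disjoint) and the self-normalization $\sum_{y\in\positives(x)}w_\theta(y\mid x)=1$. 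This produces the leading ``$1$'' in~\eqref{eq:implicit_grad}.

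\textbf{Similarity term.} $\loss_{\text{sim}}$ in~\eqref{eq:sim_loss} depends on the logits only through the weights $w_\theta$, which I just showed are independent of $z_{y'}$. The cosine factor depends on the representations $f_\theta(x,y)$ and the predictor $h_\phi$, not on the output logit $z_{y'}$, and the siamese branch is under $\mathrm{sg}(\cdot)$. So in the logit-level analysis this term contributes $0$. I expect this step to be the main point of care. One has to state explicitly that, at the logit level, the representation $f_\theta$ is treated as independent of the response-level logits, which is the same modeling idealization the theorem already makes by writing $\pi_\theta$ as a softmax over response logits. I will make this convention explicit rather than attempt a full chain rule through shared parameters.

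\textbf{Entropy term.} With $H=-\sum_y \pi_y\log\pi_y$, I compute
\[
\frac{\partial H}{\partial z_{y'}}=-\sum_y \frac{\partial\pi_y}{\partial z_{y'}}\,(\log\pi_y+1),\qquad \frac{\partial\pi_y}{\partial z_{y'}}=\pi_y(\indicator[y=y']-\pi_{y'}).
\]
Since $\sum_y \partial\pi_y/\partial z_{y'}=0$, the ``$+1$'' drops out, and
\[
\frac{\partial H}{\partial z_{y'}}=-\pi_{y'}\log\pi_{y'}+\pi_{y'}\sum_y\pi_y\log\pi_y=-\pi_{y'}\bigl(\log\pi_{y'}+H\bigr).
\]
Because $\beta\loss_{\text{ent}}=-\beta H$ by~\eqref{eq:entropy}, this term contributes $\beta\,\pi_\theta(y'\mid x)\bigl(\log\pi_\theta(y'\mid x)+H(\pi_\theta(\cdot\mid x))\bigr)$.

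Summing the three contributions and factoring out $\pi_\theta(y'\mid x)$ gives exactly~\eqref{eq:implicit_grad}.
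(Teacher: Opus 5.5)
Your proposal is correct and matches the paper's proof step for step. Both use the same per-prompt softmax-logit idealization and the same three-term decomposition: the weighted NLL on positives gives $\pi_\theta(y'\mid x)$, the similarity term contributes $0$, and the entropy term gives $\beta\,\pi_\theta(y'\mid x)\bigl(\log\pi_\theta(y'\mid x)+H(\pi_\theta(\cdot\mid x))\bigr)$. The only difference is cosmetic: you get $\partial w_\theta(y\mid x)/\partial z_{y'}=0$ directly from the cancellation $w_\theta(y\mid x)=\exp(z_y)/\sum_{y''\in\positives(x)}\exp(z_{y''})$, where the paper reaches the same fact via the quotient rule and the identity $\partial Z^+/\partial z_{y'}=-\pi_\theta(y'\mid x)\,Z^+$; your route is slightly cleaner.
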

This gradient is strictly positive when $\pi_\theta(y' \mid x) \;>\; \exp\!\Big({-\tfrac{1}{\beta} - H(\pi_\theta(\cdot \mid x))}\Big)$. Simply speaking, the gradient decomposes into two reinforcing forces: a uniform ``probability tax'' from the NLL loss on positives (via the softmax partition function), and a probability-dependent penalty from entropy regularization that is strongest on the most probable incorrect responses. The full proof is given in Appendix~\ref{app:proof_implicit}.
 
% Stability + Lemma ---
\paragraph{Stability through Bounded Gap and Asymmetric Architecture.}
Having established the theorem of implicit negative gradients, we next show a lemma that stabilizes the asymmetric policy training. The EMA anchoring and similarity regularization jointly prevent unbounded policy drift, avoiding the pathological gradient explosions.

\begin{lemma}[EMA-Bounded Parameter gap]
\label{lem:ema_bound}
Under POPO updates with EMA momentum $\tau \in (0, 1)$, learning rate $\eta > 0$, and gradient norm bound $\|\nabla_\theta \loss_{\text{POPO}}\| \leq G_{\max}$, the parameter divergence between the policy network $\theta_t$ and the Simease network $\xi_t$ satisfies at steady state:
\begin{equation}
    \|\theta_t - \xi_t\| \;\leq\; \frac{\tau \, \eta \, G_{\max}}{1 - \tau}.
    \label{eq:ema_bound}
\end{equation}
\end{lemma}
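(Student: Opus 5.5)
We track the gap $d_t := \theta_t - \xi_t$ and derive a one-step linear recursion for it, following the update order in Algorithm~\ref{alg:POPO}. Within iteration $t$, the policy is updated first: $\theta_{t} = \theta_{t-1} - \eta g_t$, with $g_t := \nabla_\theta \loss_{\text{POPO}}$ evaluated at $\theta_{t-1}$, and $g_t = 0$ if $\mathcal{B}^+$ is empty. The EMA step~\eqref{eq:ema} then uses the \emph{new} $\theta_t$: $\xi_t = \tau \xi_{t-1} + (1-\tau)\theta_t$. Subtracting gives
\begin{equation*}
    \theta_t - \xi_t \;=\; \tau\,(\theta_t - \xi_{t-1}) \;=\; \tau\,(\theta_{t-1} - \xi_{t-1}) - \tau\eta\, g_t ,
\end{equation*}
so $d_t = \tau d_{t-1} - \tau \eta g_t$. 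This is where the factor $\tau$ in the numerator of~\eqref{eq:ema_bound} comes from: the EMA step already absorbs a $(1-\tau)$ fraction of the fresh gradient step.

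Next, we unroll the recursion and take norms. With the initialization $\xi_0 = \theta_0$ from Algorithm~\ref{alg:POPO}, we have $d_0 = 0$, hence
\begin{equation*}
    d_t \;=\; -\eta \sum_{k=1}^{t} \tau^{\,t-k+1} g_k .
\end{equation*}
The triangle inequality and the assumption $\|g_k\| \le G_{\max}$ give
\begin{equation*}
    \|d_t\| \;\le\; \eta G_{\max} \sum_{j=1}^{t} \tau^{j} \;=\; \eta G_{\max}\,\frac{\tau(1-\tau^{t})}{1-\tau} \;\le\; \frac{\tau \eta G_{\max}}{1-\tau}.
\end{equation*}
The bound therefore holds for every $t$, not only asymptotically. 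The ``steady state'' qualifier can be read as the limit $t\to\infty$, where $\tau^t \to 0$ and the geometric series saturates. If one does not assume $d_0 = 0$, the same unrolling adds a term $\tau^t \|d_0\|$, which vanishes as $t \to \infty$. The stated inequality is then the limsup bound, which is the steady-state reading. An alternative route is a fixed-point argument: $\|d_t\| \le \tau\|d_{t-1}\| + \tau\eta G_{\max}$ is a contraction whose fixed point is exactly $\tau\eta G_{\max}/(1-\tau)$, and induction shows that the set below this value is invariant.

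The main obstacle is pinning down the update ordering correctly, since it decides between the numerator $\tau$ and the numerator $1$. If the EMA used the pre-update $\theta_{t-1}$, the recursion would become $d_t = \tau d_{t-1} - \eta g_t$, and the bound would be $\eta G_{\max}/(1-\tau)$. I will state explicitly that Algorithm~\ref{alg:POPO} performs the EMA after the gradient step (line order), which yields the sharper constant claimed. A second minor point is that the gradient bound must cover the parameters $\theta$ actually moved by the EMA; the predictor parameters $\phi$ are not averaged and are irrelevant here.
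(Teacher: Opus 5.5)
Your proof is correct and follows the paper's argument. You derive the same gap recursion $d_t = \tau d_{t-1} - \tau\eta g_t$, which comes from the EMA step using the post-update $\theta$, and bound it by a geometric series starting from $d_0 = 0$. The only cosmetic difference is that you unroll the vector recursion before taking norms, whereas the paper first takes norms to get $\|\delta_{t+1}\| \le \tau\|\delta_t\| + \tau\eta G_{\max}$ and then unrolls that scalar inequality.
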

This lemma guarantees the bounded gap even though we do not have the conventional $KL$ divergence (see proof in Appendix~\ref{app:proof_ema}). Further, the asymmetric network architecture ensures that the policy network must actively align with a slowly evolving average rather than collapsing to a degenerate solution. Indeed, in symmetric architectures, training only on positive risks may cause self-distill SFT collapse~\citep{li2025online}. Our POPO breaks symmetry through the above mechanisms.

\section{Experiments}
\label{sec:experiments}
% ============================================
\subsection{Experimental Setup}
\label{subsec:setup}

\textbf{Training Data and Evaluation Benchmarks.}
We adopt the \textbf{DeepScaleR-Preview-Dataset}~\citep{luo2025deepscaler} as the \textit{single training} dataset for all experiments. It contains around 40K mathematical problems sourced from diverse competitions, covering difficulty levels from elementary through olympiad-level. Using a single dataset throughout eliminates data composition as a confound when comparing benchmark algorithms. Each problem provides a ground-truth answer for binary reward verification: $R(x, y) = \indicator[\texttt{extract}(y) = a^*]$, where $\texttt{extract}(\cdot)$ parses the boxed answer from the generated response $y$ and $a^*$ is the reference answer. We evaluate on the following five benchmarks: \textbf{MATH-500}~\citep{hendrycks2021measuring}, \textbf{AMC 2023}~\citep{li2024numinamath}, \textbf{AIME 2024}~\citep{li2024numinamath}, \textbf{AIME 2025}~\citep{balunovic2025matharena}, \textbf{OlympiadBench}~\citep{he2024olympiadbench}

\textbf{Base Models and Algorithm Baselines.}
We conduct full-parameter fine-tuning across six base models spanning two scales ($\sim$1.5B and $\sim$7--8B) and three pretraining paradigms.
\textbf{Math-specialized}: Qwen2.5-Math-1.5B and Qwen2.5-Math-7B~\citep{yang2024qwen2}, pretrained on large-scale mathematical corpora, representing the strongest available math foundations.
\textbf{Distilled-reasoning}: DeepSeek-R1-Distill-Qwen-1.5B (named as R1-Distll-1.5B) and DeepSeek-R1-Distill-Qwen-7B (named as R1-Distll-7B)~\citep{guo2025deepseek} , distilled from DeepSeek-R1's COT reasoning, thus with strong reasoning priors.
\textbf{General-purpose}: Llama-3.1-8B~\citep{grattafiori2024llama} and DeepSeek-Math-7B~\citep{shao2024deepseekmath}, included as controls for the general models. All methods are implemented using the \textit{Hugging Face TRL} package for consistent reward computation, training, and generation. We compare six methods: \textbf{Base}, \textbf{SFT}~\citep{ouyang2022training}, \textbf{GRPO}~\citep{shao2024deepseekmath}, \textbf{Dr. GRPO}~\citep{liu2025understanding}, \textbf{DAPO}~\citep{yu2025dapo}, \textbf{SAPO}~\citep{gao2025soft}, \textbf{POPO} (our approach). For the two general-purpose base models (Llama-3.1-8B and DeepSeek-Math-7B), we compare only the GRPO and POPO.

\textbf{Evaluation Protocol.}
We consistently report the Pass@8 metrics for all benchmarks. Pass@$k$ is computed using the unbiased estimator~\citep{chen2021evaluating,lewkowycz2022solving,wang2022self}. Output responses use vLLM~\citep{kwon2023efficient} for fast generation. 

% We use the Wilcoxon signed-rank test~\citep{wilcoxon1945individual} to evaluate the significance.

% ============================================
\subsection{POPO achieved superior performance when the reasoning task became harder.}
\label{subsec:main_results}
% ---------- Main Table: Pass@8 ----------
\begin{table}[h]
\centering
\resizebox{\textwidth}{!}{%
\begin{tabular}{@{}llcccccc@{}}
\toprule
Models & Algorithms & \textbf{MATH-500} & \textbf{AMC 23} & \textbf{AIME 24} & \textbf{AIME 25} & \textbf{Olympiad} & \textbf{Avg.} \\
\midrule
\multirow{7}{*}{\small Qwen-Math-1.5B}
& Base         & 80.20 & 65.00 & 20.00 & 6.67 & 48.22 & 44.02 \\
& SFT         & 78.80 & 71.89 & 20.70 & 10.00 & 47.77 & 45.83 \\
& GRPO         & 86.20 & 75.00 & 23.33 & 16.25 & 50.30 & 50.22 \\
& SAPO         & 86.20 & 76.17 & 23.33 & 17.11 & 50.74 &  50.71 \\
& Dr. GRPO      & 86.00 & 76.62 & 25.18 & 17.39 & 50.89 & 51.22 \\
& DAPO         & 86.40 & 75.28 & 22.85 & 16.67 & 50.59 & 50.36 \\
& POPO (our) & \textbf{86.60} & \textbf{77.50} & \textbf{26.67} & \textbf{23.33} & \textbf{51.19} & \textbf{53.06} \\
\midrule
\multirow{7}{*}{\small R1-Distill-1.5B}
& Base         & 86.20 & 75.00 & 20.00 & 20.00 & 54.15 & 51.07 \\
& SFT         & 87.00 & 77.08 & 21.07 & 21.19 & 55.04 & 52.28 \\
& GRPO         & 90.20 & 80.00 & 30.00 & 26.32 & 58.61 & 57.03\\
& SAPO         & 90.60 & 83.32 & 30.21 & 26.67 & \textbf{59.20} & 58.00 \\
& Dr. GRPO      & 90.40 & 83.46 & 33.33 & 25.84 & 58.90 & 58.39 \\
& DAPO         & 90.60 & 82.75 & 29.72 & 26.45 & 58.75 & 57.65 \\
& POPO (our) & \textbf{90.80} & \textbf{85.24} & \textbf{36.67} & \textbf{27.86} & 59.05 & \textbf{59.92} \\
\midrule
\multirow{7}{*}{\small Qwen-Math-7B}
& Base         & 82.60 & 72.50 & 33.33 & 20.00 & 44.21 & 50.53 \\
& SFT         & 84.00 & 70.55 & 32.81 & 18.93 & 44.07 & 50.07 \\
& GRPO         & 90.80 & 85.50 & 43.33 & 30.00 & 56.38 & 61.20 \\
& SAPO         & 91.20 & \textbf{86.12} & \textbf{46.67} & 31.04 & 56.67 & 62.34 \\
& Dr. GRPO      & 91.00 & 85.63 & 46.45 & 30.99 & 56.82 & 62.18 \\
& DAPO         & \textbf{91.40} & 85.97 & 44.76 & 32.31 & 56.53 & 62.19 \\
& POPO (our) & 90.80 & 85.75 & 45.13 & \textbf{36.67} & \textbf{57.27} & \textbf{63.12} \\
\midrule
\multirow{7}{*}{\small R1-Distill-7B}
& Base         & 90.40 & 82.50 & 36.67 & 26.67 & 63.65 & 59.98 \\
& SFT         & 91.40 & 84.82 & 39.15 & 33.71 & 62.31 & 62.28 \\
& GRPO         & 93.60 & 87.50 & 46.67 & 33.33 & 66.91 & 65.60 \\
& SAPO         &\textbf{94.20} & \textbf{88.21} & 46.71 & 35.00 & 67.06 & \textbf{66.24} \\
& Dr. GRPO      & 93.80 & 87.67 & \textbf{48.59} & 33.50 & \textbf{67.51} & 66.21 \\
& DAPO         & 94.00 & 87.75 & 47.83  & 33.87 & 66.28 & 65.95 \\
& POPO (our) & 93.20 & 87.25 & 47.22 & \textbf{36.67} & 66.76 & 66.22 \\
\bottomrule
\end{tabular}%
}
\caption{Comparison of RL algorithms' performance across mathematical reasoning benchmarks using math-specialized and distilled base models. Bold numbers indicate the best.}
\label{tab:main_passk}
\end{table}

We began our experimental analysis with a small yet effective Qwen-Math-1.5B model, which has been widely used in recent studies~\citep{wang2025reinforcement}. In Table~\ref{tab:main_passk} first row, we observed that the Qwen-Math-1.5B model achieved superior performance across all five mathematical benchmarks (Math-500: $86.60$, AMC 23: $77.50$, AIME 24: $26.67$, AIME 25: $23.33$, Olympiad: $51.19$). Further, compared to the other baseline RL algorithms, our POPO performed in average better across five datasets (POPO: $53.06$, v.s. GRPO $50.22$; SAPO $50.71$; Dr. GRPO $51.22$; DAPO $50.36$). These results gave the first confidence that our proposed POPO also stabilized reasonable policy optimization using only online positive rollouts. 

Next, we further tested our POPO using R1-Distill-1.5B, Qwen-Math-7B (see Table~\ref{tab:main_passk}, second and third row). Those widely used open-source models are specialized in mathematical reasoning tasks. First, as consistently investigated in many prior research~\citep{snell2024scaling}, both the distilled model (from powerful DeepSeek-R1 models with strong COT ability) and large models showed better performance in general. For example, either POPO R1-Distill-1.5B ($59.92$) and POPO Qwen-Math-7B ($63.12$) outperformed POPO Qwen-Math-1.5B ($53.06$). Second, across all math-specialized and distilled base models, we observed that POPO almost performed better on the hard mathematical reasoning tasks, e.g., AIME 25 36.67 and Olympiad 57.27 using Qwen-Math-7B. Considering the fact that AIME25 was released (publicly available) after the model release, it rejected the potential that POPO performed better due to any potential data leakage in the pretraining phase. Meanwhile, the clear trend that POPO started with comparable performance with GRPO in MATH-500, and later achieved better
performance in the AIME 25 (AIME 25 $17.59\%$ v.s. MATH-500 $1.66\%$), suggested relative $15.93\%$ improvement. This trend illustrated our assumption that the combinatorial vastness made penalizing a few sampled negatives unlikely to cover a meaningful reward signal under sparse binary rewards. We provided a concrete example of different RL responses given the same AIME 25 mathematical question in Appendix~\ref{Examples responses from POPO and other RL algorithms}.       

Regarding the R1-Distill-7B model in Table~\ref{tab:main_passk} fourth row, while our POPO still outperformed the baseline results (POPO: 66.22 v.s. Base: 59.98 ), our POPO slightly underperformed the GRPO across different datasets. One potential cause was that the model entropy reduced drastically when a larger 7B model was distilled using DeepSeek-R1. This might limit the on-policy exploration ability, while the negative results would contain more reasonable penalty information for GRPO (and its extensions) for efficient policy optimization. One of our future works will consider the negative rollout effects in the Distilled model to improve the POPO's performance (see Discussion).  

To conclude, the above results provide both model-agnostic and dataset-agnostic evidence for applying POPO to improve mathematical reasoning ability.

\subsection{Ablation and Sweeping Studies}
\label{subsec:ablation}

\begin{table}[h]
\centering
\begin{tabular}{@{}l c c c@{}}
\toprule
& \multicolumn{1}{c}{\textbf{AIME 2025}} & \multicolumn{1}{c}{\textbf{Olympiad}} & \multicolumn{1}{c}{\textbf{Avg.}} \\
\midrule
POPO \ (default)                     & 23.33 & 51.19 & 37.26 \\
POPO \ w/o negative rollouts; w uniform weight   & 16.67 & 49.11 & 32.89 \\
POPO \ w negative rollouts; w/o uniform weight     & 13.33 & 48.07 & 30.70 \\
POPO \ w negative rollouts; w uniform weight       & 6.67 & 39.32 & 23.00  \\
\bottomrule
\end{tabular}
\caption{Ablation study on importance sampling and weight redistribution. The default setup is without (w/o) negative rollouts and with weight redistribution. w and w/o stand for with and without, respectively.}
\label{tab:ablation 1}
\end{table}

Having established the results in Table~\ref{tab:main_passk}, we next investigated several key components in our POPO, i.e., importance sampling, weight redistribution, and momentum adaptation, and representation-space alignment via ablation and sweeping studies. Due to the limit of computational resources, we took Qwen-Math-1.5B as our base model for simplicity.

\textbf{Importance sampling and weight redistribution.} We first considered the situation where we removed the positive rollout constraints, indicating the same rollout usage as GRPO. We surprisingly found that POPO still maintained reasonable performance (30.70 v.s. default 37.26) across AIME 25 and Olympiad benchmarks, which contradicted our intuition. Recall that in the GRPO, we have grouped relative advantage estimation to penalize the negative rollouts. In contrast, POPO respects all rollouts for positive reinforcement. We assumed this magic was due to the weight redistribution, which guaranteed the negative gradient, by implicitly penalizing the negative rollouts. In the ablation, we removed the redistribution weight by treating all rollouts uniformly. In Table~\ref{tab:ablation 1}, we observed that POPO's performance dropped drastically (23.00). \textbf{Momentum adaptation and alignment.} In Table~\ref{tab:ablation 2}, we observed that by removing the momentum adaptation, the POPO performances dropped to 30.85, indicating the necessity of slowing the adaptation of the anchor model. This observation also aligned with recent studies that a static reference model may limit the post-training RL performance. Table~\ref{tab:ablation 2} second row showed that without the representation-space alignment, the policy training was unstable, causing the performance drop. \textbf{Sweep study.} We presented studies for predictor layers, predictor dimensions, and momentum rate in the Appendix~\ref{sweep studies}.

\begin{table}[t]
\centering
\begin{tabular}{@{}l c c c@{}}
\toprule
& \multicolumn{1}{c}{\textbf{AIME 2025}} & \multicolumn{1}{c}{\textbf{Olympiad}} & \multicolumn{1}{c}{\textbf{Avg.}} \\
\midrule
POPO        & 23.33 & 51.19 & 37.26 \\
POPO \ w/o momentum   & 13.33 & 48.37 & 30.85 \\
POPO \ w/o alignment   &  13.33 & 50.15 & 31.74 \\
\bottomrule
\end{tabular}
\caption{Ablation study on momentum adaptation and representation-space alignment.}
\label{tab:ablation 2}
\end{table}

\subsection{POPO for General Text-LLM Models}
Last, we tested the POPO performance using Llama-3.1-8B and DeepSeek-Math-7B. Both models differ from the major Qwen family in the pretraining phase, thus further providing model-agnostic evidence for applying POPO to improve reasoning ability. Specifically, Llama-3.1-8B is a general-purpose language model without any math-specific data curation, and DeepSeek-Math-7B is a continuous pretraining from DeepSeek-Coder-Base-v1.5 7B~\citep{guo2024deepseek}. In Table~\ref{tab:Comparison of RL algorithms' performance using general text-LLM models}, we also observed that POPO in general performs comparably to GRPO (Llama-3.1-8B: 29.84 v.s. 26.72; DeepSeek-Math-7B: 50.15 v.s. 47.81), which again demonstrated the potential model-agnostic advantages of POPO.

\begin{table}[h]
\centering
\resizebox{\textwidth}{!}{%
\begin{tabular}{@{}ll cccccc@{}}
\toprule
Models& Algorithms & \textbf{MATH-500} & \textbf{AMC 23} & \textbf{AIME 24} & \textbf{AIME 25} & \textbf{Olympiad} & \textbf{Avg.} \\
\midrule
\multirow{2}{*}{\small Llama-3.1-8B}
& GRPO         & 56.40 & 35.22 & 10.00 & 6.67 & 25.32 & 26.72 \\
& POPO (our) & \textbf{59.80} & \textbf{37.78} & \textbf{13.33} & \textbf{10.00} & \textbf{28.29} & \textbf{29.84} \\
\midrule
\multirow{2}{*}{\small DeepSeek-Math-7B}
& GRPO         & 80.20 & 73.38 & 20.00 & 16.67 & 48.81 & 47.81 \\
& POPO (our) & \textbf{82.60} & \textbf{74.53} & \textbf{23.33} & \textbf{20.00} & \textbf{50.29} & \textbf{50.15} \\
\bottomrule
\end{tabular}%
}
\caption{Comparison of RL algorithms' performance using general text-LLM models.}
\label{tab:Comparison of RL algorithms' performance using general text-LLM models}
\end{table}

\section{Discussion and Conclusion}
We propose the POPO framework that enables effective post-training RL via only positive rollouts. In recent RLVR research, we also note that POPO shares a similar spirit with recent works. For example, ~\citep{cui2026clipo} uses a contrastive head to align reasoning paths within the semantic space. ~\citep{yang2026learning} applies the teacher-student architecture for reward extrapolation in on-policy distillation. However, POPO is unique in combining innovative components, including bounded importance sampling, adaptive EMA anchoring, and representation-space alignment. Further, we also provide mathematical proofs beyond the heuristic intuition. We test POPO using well-established open-source base LLM models on public mathematical reasoning benchmarks. Our results illustrate the potential implications of POPO in future RLVR. However, our work has a few limitations. First, we design POPO for sparse binary rewards, yet many practical scenarios are directly involved with dense reward signals~\citep{setlur2024rewarding,cui2025process}. We will extend our POPO by introducing an adaptive rollout partition mechanism for better utilizing negative rollouts. Second, POPO is validated only on text-only LLMs with purely mathematical reasoning benchmarks. We will extend POPO to other domains such as code generation tasks~\citep{le2022coderl}, agentic RL~\citep{feng2024agile}, and multimodal reasoning~\citep{lu2023mathvista}. Third, we only test POPO up to 7B models.  We will scale it up to 32B or 70B~\citep{yang2024qwen2} in the future.

\section*{Disclose Use of LLMs}
We used LLMs, such as Claude and GPT, for initial idea brainstorming and later to check grammar and spelling and improve the clarity of our manuscript's language. All scientific contents, including figures, tables, experimental design, analysis, and interpretations, are entirely the original work of the authors.

\section*{Ethics Statement}
All authors of this study strictly adhere to the COLM ethical guidelines. Our research does not involve any potential conflicts of interest or sponsorship issues. We have carefully considered and addressed issues related to discrimination, bias, and fairness in our methodology. This study does not involve any privacy or security concerns, fully complies with legal requirements, and upholds the highest standards of research integrity. All experimental procedures and data processing follow established ethical guidelines for machine learning research.

\section*{Reproducibility Statement}
To ensure reproducibility of the results, we provide complete implementation details of the proposed POPO training algorithm in the Appendix~\ref{app:training_details}, where all experimental settings, hyperparameters, and dataset specifications are clearly documented. A pre-release of our POPO algorithm implementation can be found in \url{https://github.com/momo1443/colm2026-POPO}. The complete code will be released upon acceptance of the paper.

% \section*{Acknowledgments}
% \section*{Author Contributions}

\bibliography{colm2026_conference}
\bibliographystyle{colm2026_conference}

\newpage

\appendix
\section*{Appendix}
\setcounter{section}{0}
\setcounter{equation}{0}
\setcounter{table}{0}
\setcounter{figure}{0}

\section{Training and Evaluation Benchmarks}
We train on DeepScaleR-Preview-Dataset and evaluate on five benchmarks spanning elementary through Olympiad-level difficulty.
\begin{itemize}
% \item{GSM8K}~\citep{cobbe2021training}: a dataset of 8.5K linguistically diverse grade-school math word problems requiring 2--8 steps of elementary arithmetic reasoning. In our evaluation, we report the results of its 1,319-problem test split.
\item{DeepScaleR-Preview-Dataset}~\citep{luo2025deepscaler}:  a curated training corpus of approximately 40,000 unique mathematics problem--answer pairs compiled from AIME problems (1984--2023), AMC problems (before 2023), the Omni-MATH dataset, and the Still dataset, spanning difficulty levels from high-school competition to advanced mathematical reasoning.
\item{MATH-500}~\citep{hendrycks2021measuring}: a 500-problem representative subset of the MATH test set, from the MATH benchmark that OpenAI created in their \textit{Let's Verify Step by Step}~\citep{jaech2024openai} paper. 
\item{AMC 2023}~\citep{li2024numinamath}:  a 40-problem benchmark curated from the 2023 AMC 12A and AMC 12B contests, covering short-answer high-school competition mathematics.
\item{AIME 2024}~\citep{li2024numinamath}: a 30-problem benchmark drawn from the 2024 AIME I and AIME II exams, consisting of challenging high-school competition problems with numerical final answers.
\item{AIME 2025}~\citep{balunovic2025matharena}: a 30-problem benchmark drawn from the 2025 AIME I and AIME II exams; because it was released in 2025, it is commonly used as a fresher benchmark with lower contamination risk for models whose pretraining predates the contest.
\item{OlympiadBench}~\citep{he2024olympiadbench}: the English mathematics portion of OlympiadBench, an Olympiad-level bilingual multimodal scientific benchmark; the official release provides English-math subsets spanning text-only and multimodal problems, with both open-ended and proof-based variants.
\end{itemize}

The downloading link for each dataset can be accessed from the Hugging Face website:
\begin{itemize}

\item{DeepScaleR-Preview-Dataset}: \url{https://huggingface.co/datasets/agentica-org/DeepScaleR-Preview-Dataset}
\item{MATH-500}: \url{https://huggingface.co/datasets/HuggingFaceH4/MATH-500}
\item{AMC 2023}: \url{https://huggingface.co/datasets/math-ai/amc23}
\item{AIME 2024}: \url{https://huggingface.co/datasets/Maxwell-Jia/AIME_2024}
\item{AIME 2025}: \url{https://huggingface.co/datasets/MathArena/aime_2025}
\item{OlympiadBench}: \url{https://huggingface.co/datasets/Hothan/OlympiadBench}
\end{itemize}

\section{Proofs of Theoretical Results}
\label{app:proofs}
 
\subsection{Proof of Theorem~\ref{thm:implicit_negative}: Implicit Negative Gradient}
\label{app:proof_implicit}

\begin{proof}
We analyze the gradient of $\loss_{\text{POPO}}$ with respect to the logit $z_{y'}$ for an incorrect response $y' \in \negatives(x)$. Recall the POPO loss in the main text equation~\eqref{eq:POPO_loss}:
\begin{equation}
    \loss_{\text{POPO}}(\theta) = \underbrace{-\sum_{y \in \positives(x)} w_\theta(y \mid x) \log \pi_\theta(y \mid x)}_{\loss_{\text{NLL}}} + \alpha \cdot \loss_{\text{sim}}(\theta, \phi, \xi) + \beta \cdot \loss_{\text{ent}}(\theta).
\end{equation}
 
We compute the gradient contribution of each component separately. Throughout, we work with a single prompt $x$ and drop the conditioning for brevity.
 
\paragraph{Step 1: Softmax preliminary.}
Under the softmax parameterization $\pi_\theta(y \mid x) = \exp(z_y)/\sum_{y''}\exp(z_{y''})$, the standard softmax Jacobian gives
\begin{align}
    \frac{\partial \pi_\theta(y \mid x)}{\partial z_{y'}} &= \pi_\theta(y' \mid x)\bigl(\delta_{y,y'} - \pi_\theta(y \mid x)\bigr),
    \label{eq:softmax_jac} \\
    \frac{\partial \log \pi_\theta(y \mid x)}{\partial z_{y'}} &= \delta_{y,y'} - \pi_\theta(y' \mid x),
    \label{eq:log_softmax_deriv}
\end{align}
where $\delta_{y,y'}$ is the Kronecker delta. Since $y \in \positives(x)$ and $y' \in \negatives(x)$ are in disjoint sets, we have $\delta_{y,y'} = 0$, and thus
\begin{equation}
    \frac{\partial \log \pi_\theta(y \mid x)}{\partial z_{y'}} = -\pi_\theta(y' \mid x) \quad \text{for all } y \in \positives(x).
    \label{eq:cross_deriv}
\end{equation}
 
\paragraph{Step 2: Weight invariance for negative logits.}
The bounded importance weights: $w_\theta(y \mid x) = \pi_\theta(y \mid x)/Z^+(x)$ where $Z^+(x) = \sum_{y'' \in \positives(x)} \pi_\theta(y'' \mid x)$. We show that these weights are invariant to perturbations of negative logits. For $y \in \positives(x)$ and $y' \in \negatives(x)$, by the quotient rule:
\begin{equation}
    \frac{\partial w_\theta(y \mid x)}{\partial z_{y'}} = \frac{1}{Z^+}\frac{\partial \pi_\theta(y \mid x)}{\partial z_{y'}} - \frac{\pi_\theta(y \mid x)}{(Z^+)^2}\frac{\partial Z^+}{\partial z_{y'}}.
    \label{eq:weight_deriv_quotient}
\end{equation}
From~\eqref{eq:softmax_jac} with $y \neq y'$:
\begin{equation}
    \frac{\partial \pi_\theta(y \mid x)}{\partial z_{y'}} = -\pi_\theta(y \mid x)\,\pi_\theta(y' \mid x).
\end{equation}
Summing over the positive set:
\begin{equation}
    \frac{\partial Z^+}{\partial z_{y'}} = \sum_{y'' \in \positives(x)} \frac{\partial \pi_\theta(y'' \mid x)}{\partial z_{y'}} = -\pi_\theta(y' \mid x) \sum_{y'' \in \positives(x)} \pi_\theta(y'' \mid x) = -\pi_\theta(y' \mid x)\, Z^+.
\end{equation}
Substituting back into~\eqref{eq:weight_deriv_quotient}:
\begin{align}
    \frac{\partial w_\theta(y \mid x)}{\partial z_{y'}} &= \frac{-\pi_\theta(y \mid x)\,\pi_\theta(y' \mid x)}{Z^+} - \frac{\pi_\theta(y \mid x)}{(Z^+)^2}\bigl(-\pi_\theta(y' \mid x)\,Z^+\bigr) \notag \\
    &= \frac{-\pi_\theta(y \mid x)\,\pi_\theta(y' \mid x)}{Z^+} + \frac{\pi_\theta(y \mid x)\,\pi_\theta(y' \mid x)}{Z^+} \;=\; 0.
    \label{eq:weight_invariance}
\end{align}
This invariance has a clear interpretation: rescaling all probabilities by the same factor (as the softmax does when a single logit changes) leaves the \emph{ratio} $\pi_\theta(y \mid x)/Z^+$ unchanged. Crucially, this means the weight normalization over $\positives(x)$ does not attenuate the implicit gradient on negatives.
 
\paragraph{Step 3: NLL gradient on negative logits.}
Applying the product rule and using~\eqref{eq:weight_invariance}:
\begin{align}
    \frac{\partial \loss_{\text{NLL}}}{\partial z_{y'}} &= -\sum_{y \in \positives(x)} \left[\frac{\partial w_\theta(y \mid x)}{\partial z_{y'}} \cdot \log \pi_\theta(y \mid x) + w_\theta(y \mid x) \cdot \frac{\partial \log \pi_\theta(y \mid x)}{\partial z_{y'}}\right] \notag \\
    &= -\sum_{y \in \positives(x)} w_\theta(y \mid x) \cdot \bigl(-\pi_\theta(y' \mid x)\bigr) \notag \\
    &= \pi_\theta(y' \mid x) \sum_{y \in \positives(x)} w_\theta(y \mid x) \;=\; \pi_\theta(y' \mid x),
    \label{eq:nll_on_neg}
\end{align}
where the last equality uses $\sum_{y \in \positives(x)} w_\theta(y \mid x) = 1$. This term is unconditionally positive: gradient descent on the NLL loss alone already decreases the probability of every incorrect response.
 
\paragraph{Step 4: Entropy gradient on negative logits.}
The entropy loss is $\loss_{\text{ent}} = -H(\pi_\theta(\cdot \mid x)) = \sum_{y} \pi_\theta(y \mid x) \log \pi_\theta(y \mid x)$. Its gradient with respect to $z_{y'}$ is:
\begin{align}
    \frac{\partial \loss_{\text{ent}}}{\partial z_{y'}} &= \sum_{y} \frac{\partial \pi_\theta(y \mid x)}{\partial z_{y'}} \bigl(1 + \log \pi_\theta(y \mid x)\bigr) \notag \\
    &= \sum_{y} \pi_\theta(y' \mid x)\bigl(\delta_{y,y'} - \pi_\theta(y \mid x)\bigr)\bigl(1 + \log \pi_\theta(y \mid x)\bigr) \notag \\
    &= \pi_\theta(y' \mid x)\Bigl[\bigl(1 + \log \pi_\theta(y' \mid x)\bigr) - \underbrace{\sum_{y} \pi_\theta(y \mid x)\bigl(1 + \log \pi_\theta(y \mid x)\bigr)}_{= \;1 - H(\pi_\theta(\cdot \mid x))}\Bigr] \notag \\
    &= \pi_\theta(y' \mid x)\bigl[\log \pi_\theta(y' \mid x) + H(\pi_\theta(\cdot \mid x))\bigr].
    \label{eq:ent_on_neg}
\end{align}
 
\paragraph{Step 5: Similarity loss does not contribute at the logit level.}
The similarity loss in the main text equation~\eqref{eq:sim_loss} depends on the negative logit $z_{y'}$ only through the weights $w_\theta(y \mid x)$. By Step~2, $\partial w_\theta(y \mid x)/\partial z_{y'} = 0$, and the cosine similarity term involves hidden representations $f_\theta(x, y)$ for $y \in \positives(x)$, which are computed from intermediate layers and do not depend on the output logit $z_{y'}$. Therefore, $\partial \loss_{\text{sim}} / \partial z_{y'} = 0$.
 
\paragraph{Step 6: Combining and sign analysis.}
Summing the contributions from Steps~3--5:
\begin{align}
    \frac{\partial \loss_{\text{POPO}}}{\partial z_{y'}} &= \frac{\partial \loss_{\text{NLL}}}{\partial z_{y'}} + \beta \frac{\partial \loss_{\text{ent}}}{\partial z_{y'}} + \alpha \frac{\partial \loss_{\text{sim}}}{\partial z_{y'}} \notag \\
    &= \pi_\theta(y' \mid x) + \beta\,\pi_\theta(y' \mid x)\bigl[\log \pi_\theta(y' \mid x) + H(\pi_\theta(\cdot \mid x))\bigr] + 0 \notag \\
    &= \pi_\theta(y' \mid x)\Big[1 + \beta\bigl(\log \pi_\theta(y' \mid x) + H(\pi_\theta(\cdot \mid x))\bigr)\Big].
    \label{eq:combined}
\end{align}
Since $\pi_\theta(y' \mid x) > 0$, the sign is determined by the bracketed factor. Setting it to zero:
\begin{equation}
    1 + \beta\bigl(\log \pi_\theta(y' \mid x) + H\bigr) = 0 \;\;\Longleftrightarrow\;\; \pi_\theta(y' \mid x) = \exp\!\Big({-\tfrac{1}{\beta} - H}\Big).
\end{equation}
The gradient is strictly positive whenever $\pi_\theta(y' \mid x)$ exceeds this threshold. In practice, this condition is typically satisfied (as $\exp\!\Big({-\tfrac{1}{\beta} - H}\Big)$ is close to $0$) for any response with non-negligible probability, which completes the proof.
\end{proof} 

The proof reveals two reinforcing forces that drive the implicit negative gradient. The first force originates from the weighted NLL loss on positives: through the softmax partition function, reinforcing correct responses imposes a uniform ``probability tax'' of $\pi_\theta(y' \mid x)$ on every incorrect logit. Notably, the bounded importance weights $w_\theta(y \mid x)$ do not attenuate this force (as shown in Step~2, $\partial w_\theta(y \mid x) / \partial z_{y'} = 0$ for all $y' \in \negatives(x)$), so the weight normalization over $\positives(x)$ passes through without affecting the implicit gradient on negatives. The second force arises from the entropy regularization equation~\eqref{eq:ent_on_neg}, which adds the strongest suppression (as the penalty) on the most probable incorrect responses.

% ========================================================================================
\subsection{Proof of Lemma~\ref{lem:ema_bound} (EMA-Bounded Parameter gap)}
\label{app:proof_ema}
 
\begin{proof}
We define the parameter gap $\delta_t = \theta_t - \xi_t$, which helps to build a recurrence relation using the POPO update rules.
 
\paragraph{Step 1: Parameter gap recurrence.}
At iteration $t$, the online parameters are updated via gradient descent:
\begin{equation}
    \theta_{t+1} = \theta_t - \eta\, g_t, \quad \text{where } g_t = \nabla_\theta \loss_{\text{POPO}}(\theta_t) \text{ with } \|g_t\| \leq G_{\max}.
    \label{eq:online_update}
\end{equation}
The target parameters are updated via EMA~\eqref{eq:ema}:
\begin{equation}
    \xi_{t+1} = \tau\, \xi_t + (1 - \tau)\, \theta_{t+1}.
    \label{eq:target_update}
\end{equation}
Subtracting~\eqref{eq:target_update} from~\eqref{eq:online_update}:
\begin{align}
    \delta_{t+1} &= \theta_{t+1} - \xi_{t+1} = \theta_{t+1} - \tau\,\xi_t - (1-\tau)\,\theta_{t+1} \notag \\
    &= \tau\,\theta_{t+1} - \tau\,\xi_t = \tau\bigl(\theta_{t+1} - \xi_t\bigr) \notag \\
    &= \tau\bigl(\theta_t - \eta\, g_t - \xi_t\bigr) = \tau\,\delta_t - \tau\,\eta\, g_t.
    \label{eq:gap_recurrence}
\end{align}
 
\paragraph{Step 2: Norm bound via contraction.}
Taking norms in~\eqref{eq:gap_recurrence} and applying the triangle inequality:
\begin{equation}
    \|\delta_{t+1}\| \leq \tau\,\|\delta_t\| + \tau\,\eta\, G_{\max}.
    \label{eq:norm_recurrence}
\end{equation}
This is a contractive recurrence with rate $\tau < 1$ and constant driving term $\tau\,\eta\, G_{\max}$.
 
\paragraph{Step 3: Steady-state bound.}
Starting from $\delta_0 = \theta_0 - \xi_0 = 0$ (since $\xi_0 = \theta_0$), we unroll~\eqref{eq:norm_recurrence}:
\begin{equation}
    \|\delta_t\| \leq \tau\,\eta\, G_{\max} \sum_{k=0}^{t-1} \tau^k = \tau\,\eta\, G_{\max} \cdot \frac{1 - \tau^t}{1 - \tau}.
\end{equation}
Taking $t \to \infty$:
\begin{equation}
    \|\delta_t\| \leq \frac{\tau\,\eta\, G_{\max}}{1 - \tau},
    \label{eq:steady_state}
\end{equation}
which establishes the claimed bound~\eqref{eq:ema_bound}. As a concrete example, for $\tau = 0.999$, $\eta = 1 \times 10^{-6}$, and $G_{\max} = 1$, the bound gives $\|\delta\| <  10^{-3}$, confirming typical bounded gap throughout training.
\end{proof}
 
Lemma~\ref{lem:ema_bound} bounds the parameter gap. In practice, the similarity loss $\loss_{\text{sim}}$ also provides a second restoring force: its gradient component $\alpha \nabla_\theta \loss_{\text{sim}}$ pushes $\theta$ back toward $\xi$ in representation space, further reducing the effective divergence. The two mechanisms are complementary. When $\tau$ is close to $1$ (strong momentum anchoring), the EMA alone provides tight control and $\alpha$ can be set small. Conversely, when $\tau$ is smaller (faster target adaptation), a larger $\alpha$ compensates by strengthening the representation-space tether.

\section{Sweep Studies}
\label{sweep studies}
\begin{table}[h]
\centering
\begin{tabular}{@{}ll@{}}
\toprule
\textbf{Parameter} & \textbf{Avg.} \\
\midrule
Predictor layers = 2      & 53.06  \\
Predictor layers = 3      & 52.45  \\
Predictor layers = 4     & 51.97 \\
Predictor dimension = 1024       & 52.38   \\
Predictor dimension = 2048       & 52.64  \\
Predictor dimension = 4096       & 53.06 \\
Momentum = 0.99       & 52.13   \\
Momentum = 0.999       & 53.06   \\
Momentum = 0.9999       & 52.21   \\
\bottomrule
\end{tabular}
\caption{Sweep study on predictor layer, embedding dimension, and momentum rate.}
\label{tab:Sweep study}
\end{table}
We presented the sweep studies of major hyperparameters in Appendix Table~\ref{tab:Sweep study}.

% ============================================
% APPENDIX: TRAINING DETAILS
% ============================================
% Place the following in the appendix section of the paper
\section{Training Details}
\label{app:training_details}

\paragraph{Infrastructure.}
All experiments are conducted on a single Kubernetes (K8s) node equipped with 4$\times$ NVIDIA A100 80GB GPUs.
We use PyTorch with Fully Sharded Data Parallelism (FSDP) for distributed training and vLLM (server mode) for rollout generation. It takes around 11 hours to run POPO for 1 epoch training on 1.5B models and 24 hours on 7B models, respectively. It takes 15 minutes for vLLM inferences. 

\paragraph{Hyperparameters.}

\begin{table}[h]
\centering
\begin{tabular}{@{}ll@{}}
\toprule
\textbf{Hyperparameter} & \textbf{Value} \\
\midrule
Optimizer & AdamW ($\beta_1{=}0.9$, $\beta_2{=}0.999$, $\epsilon{=}1\text{e-}8$) \\
Learning rate & 1e-7 \\
LR schedule & Cosine decay with 10\% warmup \\
Epoch & 1 \\
Per-device batch size & 32 \\
Gradient accumulation steps & 4 \\
Group size  & 8 \\
Max prompt length & 512 \\
Max completion length & 1024 \\
Generation temperature & 1.0 \\
Weight decay & 0.01 \\
Gradient clipping & 1.0 \\
Num iterations & 1 \\
Precision & bfloat16 \\
\bottomrule
\end{tabular}
\caption{Shared training hyperparameters across RL methods.}
\label{tab:shared_hyperparams}
\end{table}

\begin{table}[h]
\centering
\begin{tabular}{@{}lll@{}}
\toprule
\textbf{Hyperparameter} & \textbf{Symbol} & \textbf{Value} \\
\midrule
EMA momentum & $\tau$ & 0.999 \\
Similarity weight & $\alpha$ & 0.1 \\
Entropy coefficient & $\beta$ & 0.01 \\
Feature noise std & $\sigma$ & 0.02 \\
Reward threshold & N/A & 0.0\\
Predictor hidden dimension & N/A & 4096 \\
Predictor layers & N/A & 2 \\

\bottomrule
\end{tabular}
\caption{POPO hyperparameter configuration.}
\label{tab:POPO_hyperparams}
\end{table}

For GRPO, we use clipping parameter $\epsilon = 0.2$ and KL coefficient $\beta_{\text{KL}} = 0.01$.
Dr. GRPO applies the corrected advantage normalization.
DAPO uses clip-higher ratio $\epsilon_{\text{high}} = 0.28$, $\epsilon_{\text{low}} = 0.2$, with dynamic sampling and overlong filtering at 1.5$\times$ the mean completion length.
SAPO uses positive temperature $\tau_{\text{pos}} = 1.0$ and negative temperature $\tau_{\text{neg}} = 1.05$.

\begin{table}[!ht]
\centering
\begin{tabular}{@{}ll@{}}
\toprule
\textbf{Parameter} & \textbf{Value} \\
\midrule
Sampling temperature & 0.7 \\
Top-$p$ & 0.95 \\
Pass $k$ & 8 \\
Samples $n$ & 128 \\
Max output length (evaluation) & 4096 \\
Inference engine & vLLM \\
Mode & server \\
Tensor parallel size & 1 \\
\bottomrule
\end{tabular}
\caption{Evaluation configuration.}
\label{tab:eval_config}
\end{table}
\newpage

\section{Examples Responses from POPO and Other RL Algorithms}
\label{Examples responses from POPO and other RL algorithms}
\subsection{DeepSeek-R1-Distill-Qwen-7B Model on AIME 2025 Dataset }
\begin{questionbox}
[Input prompt: \\
$<|im \ start|>$ System. \\
Please reason step by step, and put your final answer within boxed $\brace{}$ $<|im \ end|>$.  \\
$<|im \ start |>$ User. \\
\textcolor{cyan}{"Question"} $<|im \ end|>$. \\
$<|im \ start|>$ Assistant ]
\textbf{Question}: 
An isosceles trapezoid has an inscribed circle tangent to each of its four sides. The radius of the circle is $3$, and the area of the trapezoid is $72$. Let the parallel sides of the trapezoid have lengths $r$ and $s$, with \(r \neq s\). Find $r^2+s^2$

\textbf{Ground-truth Answer}: 504
\end{questionbox}

\begin{responsebox}
\textbf{POPO Response}: 
First, recognize that an isosceles trapezoid with an inscribed circle is a tangential trapezoid, meaning the sum of the lengths of the two parallel sides equals the sum of the lengths of the two non-parallel sides.
Given the radius of the inscribed circle is 3, the height (distance) between the two parallel sides is 6.
The area of a trapezoid is given by the formula:
$ \text{Area} = \frac{(r + s)}{2} \times h $
Plugging in the known values:$72 = \frac{(r + s)}{2} \times 6$
Solving for $ r + s $:$r + s = 24$
Since the trapezoid is tangential, the sum of the lengths of the two parallel sides is equal to the sum of the lengths of the two non-parallel sides. Let the non-parallel sides each have length $ l $:$r + s = 2l$ $24 = 2l$ $l = 12$
To find $ r^2 + s^2 $, use the identity:$(r + s)^2 = r^2 + s^2 + 2rs$ $576 = r^2 + s^2 + 2rs$
Additionally, the Pythagorean theorem can be applied to the non-parallel sides, where the height is 6, and the difference between the bases is $ r - s $:
$ 12^2 = 6^2 + \left( \frac{r - s}{2} \right)^2 $
$ 144 = 36 + \left( \frac{r - s}{2} \right)^2 $
$ \left( \frac{r - s}{2} \right)^2 = 108 $
$ (r - s)^2 = 432 $
Now, solve the system of equations:$r^2 + s^2 + 2rs = 576$ $r^2 + s^2 - 2rs = 432$  
Adding both equations:
$ 2(r^2 + s^2) = 1008 $
$ r^2 + s^2 = \boxed{504} \quad \textcolor{green}{Correct} $
\end{responsebox}

\begin{responsebox}
\textbf{GRPO Response}:
To solve the problem, we'll use the properties of an isosceles trapezoid with an inscribed circle.
First, recall that for a trapezoid to have an inscribed circle (be tangential), the sum of the lengths of the two parallel sides must equal the sum of the lengths of the two non-parallel sides.
Let’s denote: The lengths of the two parallel sides as $ r $ and $ s $ (with $ r \neq s $).
Given that the radius of the inscribed circle is 3, the diameter is 6. However, in an isosceles tangential trapezoid, the diameter isn't directly the height. Instead, the radius is related to the area.The area $ A $ of a trapezoid can be expressed as: $A = \frac{(r + s)}{2} \times h$ where $ h $ is the height.
Wait, actually, no. In a tangential quadrilateral, the area is equal to the product of the semiperimeter and the radius: $A = (r + s + 2a)/2 \times r$ but in this case, for a trapezoid, the semiperimeter $ p $ is:$p = \frac{r + s + 2a}{2}$ and the area is also given by:$A = p \times r = \frac{r + s + 2a}{2} \times r$
However, we also know that in an isosceles tangential trapezoid, the sum of the two parallel sides equals the sum of the two non-parallel sides:$r + s = 2a$.Therefore, substituting into the area formula: $A = \frac{(r + s) + (r + s)}{2} \times r = (r + s) \times r$
But we are given that the area $ A = 72 $, so:
$ (r + s) \times r = 72 $
Let's consider the height $ h $ of the trapezoid.
In a tangential trapezoid, the height is equal to twice the radius:
$ h = 2r = 6 $
Therefore, the area can also be expressed as:
$ A = \frac{r + s}{2} \times h = \frac{r + s}{2} \times 6 = 3(r + s) $
Given that $ A = 72 $, we have:
$ 3(r + s) = 72 $
$ r + s = 24 $
Now, we have two equations:
 $ r + s = 24 $
$ (r + s) \times r = 72 $
From the first equation, $ r + s = 24 $. Plugging this into the second equation:
$ 24 \times r = 72 $
$ r = 3 $
Substituting $ r = 3 $ back into $ r + s = 24 $:
$ s = 24 - 3 = 21 $
Finally, we need to find $ r^2 + s^2 $:
$ r^2 + s^2 = 3^2 + 21^2 = 9 + 441 = 450 $
The value of $ r^2 + s^2 $ is \boxed{450}.\quad \textcolor{red}{Wrong}
\end{responsebox}

\begin{responsebox}
\textbf{Dr. GRPO Response}:
To solve the problem, I need to use the properties of an isosceles trapezoid with an inscribed circle (incircle). Here are the key steps:
1. \textbf{Understanding the Properties:}
   - An isosceles trapezoid with an incircle must be tangential, meaning the sum of the lengths of the two opposite sides is equal. In this case, the sum of the two parallel sides (bases) equals the sum of the two non-parallel sides (legs).
2. \textbf{Expressing the Perimeter:}
   - Let the lengths of the two parallel sides be \( r \) and \( s \), and the lengths of the legs be \( l \). Since the trapezoid is tangential, \( r + s = 2l \).
3. \textbf{Calculating the Area:}
   - The area of a trapezoid is given by \( \text{Area} = \frac{(r + s)}{2} \times h \), where \( h \) is the height. Given the area is 72, we have \( \frac{(r + s)}{2} \times h = 72 \).
   - Since the trapezoid is tangential and has an incircle with radius 3, the height \( h = 2 \times \text{radius} = 6 \).
4. \textbf{Solving for the Sum of the Bases:}
   - Substituting \( h = 6 \) into the area formula: \( \frac{(r + s)}{2} \times 6 = 72 \) leads to \( r + s = 24 \).
5. \textbf{Expressing the Legs in Terms of the Bases:}
   - From the tangential property, \( r + s = 2l \) implies \( l = 12 \).
6. \textbf{Using the Pythagorean Theorem:}
   - The legs of the trapezoid can be expressed in terms of the height and the difference of the bases. Let \( d = \frac{r - s}{2} \). Then, \( l^2 = h^2 + d^2 \).
   - Substituting \( l = 12 \) and \( h = 6 \), we find \( d = \sqrt{12^2 - 6^2} = \sqrt{108} = 6\sqrt{3} \).
7. \textbf{Calculating \( r^2 + s^2 \):}
   - Using the identities \( (r + s)^2 = r^2 + 2rs + s^2 \) and \( (r - s)^2 = r^2 - 2rs + s^2 \), we can solve for \( r^2 + s^2 \).
   - Adding these equations gives \( 2(r^2 + s^2) = (r + s)^2 + (r - s)^2 \).
   - Substituting \( r + s = 24 \) and \( r - s = 12\sqrt{3} \), we get \( r^2 + s^2 = \frac{24^2 + (12\sqrt{3})^2}{2} = \frac{576 + 432}{2} = 504 \).
\[
\boxed{504}\quad \textcolor{green}{Correct}
\]
\end{responsebox}

\begin{responsebox}
\textbf{SAPO Response}:
To solve the problem, let's break it down step by step:
First, we know that the trapezoid is isosceles and has an inscribed circle. This means it is a tangential trapezoid, and the sum of the lengths of the two parallel sides (bases) must equal the sum of the lengths of the two non-parallel sides (legs).
Given that the radius of the inscribed circle is 3, the height of the trapezoid is twice the radius, so the height is 6.
The area of a trapezoid is given by the formula:
$ \text{Area} = \frac{(r + s)}{2} \times \text{height} $
We're given the area is 72, so:
$ 72 = \frac{(r + s)}{2} \times 6 $
Simplifying this equation:
$ 72 = 3(r + s) \\ r + s = 24 $
Since the trapezoid is tangential, the sum of the lengths of the two bases equals the sum of the lengths of the two legs. Let's denote the legs as $ l $. Therefore:
$ r + s = 2l \ 24 = 2l \ l = 12 $
Now, each leg can be thought of as the hypotenuse of a right triangle formed by the height of the trapezoid and half the difference of the bases. Let's denote the difference between the bases as $ s - r $. Then, half the difference is $ \frac{s - r}{2} $. Using the Pythagorean theorem:
$ l^2 = \left( \frac{s - r}{2} \right)^2 + \text{height}^2 \\ 12^2 = \left( \frac{s - r}{2} \right)^2 + 6^2 \\ 144 = \left( \frac{s - r}{2} \right)^2 + 36 \\ \left( \frac{s - r}{2} \right)^2 $
$ = 108\frac{s - r}{2} = \sqrt{108} \\ \frac{s - r}{2} = 6\sqrt{3} \\ s - r = 12\sqrt{3} $
Now, we have two equations:
1. $ r + s = 24 $
2. $ s - r = 12\sqrt{3} $
Adding these two equations:
$ 2s = 24 + 12\sqrt{3} \\ s = 12 + 6\sqrt{3} $
Subtracting the second equation from the first:
$ 2r = 24 - 12\sqrt{3} \\ r = 12 - 6\sqrt{3} $
Finally, to find $ r^2 + s^2 $:
$ r^2 + s^2 = (12 - 6\sqrt{3})^2 + (12 + 6\sqrt{3})^2 \\ = (144 - 144\sqrt{3} + 108) + (144 + 144\sqrt{3} + 108) \\ $
$ = 144 - 144\sqrt{3} + 108 + 144 + 144\sqrt{3} + 108 \\ = 144 + 108 + 144 + 108 \\ = 504 $
The final answer is $ \boxed{504} $.\quad \textcolor{green}{Correct}
\end{responsebox}

\begin{responsebox}
\textbf{DAPO Response}:
To solve the problem, let's start by recalling some properties of an isosceles trapezoid with an inscribed circle (a tangential trapezoid).
First, in a tangential quadrilateral (a quadrilateral with an inscribed circle), the sum of the lengths of the two opposite sides is equal. For an isosceles trapezoid, this means the sum of the two parallel sides (the bases) is equal to the sum of the two non-parallel sides (the legs). Let's denote the lengths of the two bases as $ r $ and $ s $, and the lengths of the legs as $ l $. So, we have:
$ r + s = 2l $
Next, we know the radius of the inscribed circle is $ 3 $, so the area $ A $ of the trapezoid can be expressed as:
$ A = (r + s) \times r = (r + s) \times 3 $
However, the area is also given as $ 72 $, so:
$ 72 = (r + s) \times 3 \implies r + s = 24 $
From the earlier equation $ r + s = 2l $, we find that:
$ 2l = 24 \implies l = 12 $
Now, the height $ h $ of the trapezoid is twice the radius, so:
$ h = 2 \times 3 = 6 $
Using the Pythagorean theorem, we can relate the difference of the bases to the legs and the height. The difference between the bases is $ |r - s| $, and half of that is:
$ \frac{|r - s|}{2} $
$ \left( \frac{|r - s|}{2} \right)^2 + h^2 = l^2 $
Substituting the known values:
$ \left( \frac{|r - s|}{2} \right)^2 + 6^2 = 12^2 $
Simplifying:
$ \left( \frac{|r - s|}{2} \right)^2 = 144 - 36 = 108 $
$ \frac{|r - s|}{2} = \sqrt{108} = 6\sqrt{3} $
$ |r - s| = 12\sqrt{3} $
Now, we have a system of equations:
$ r + s = 24 $
$ |r - s| = 12\sqrt{3} $
Assuming $ r > s $, we can write:
$ r - s = 12\sqrt{3} $
Now, solve for $ r $ and $ s $:
Adding the equations:
$ 2r = 24 + 12\sqrt{3} \implies r = 12 + 6\sqrt{3} $
Subtracting the equations:
$ 2s = 24 - 12\sqrt{3} \implies s = 12 - 6\sqrt{3} $
Finally, calculate $ r^2 + s^2 $:
$ r^2 + s^2 = (12 + 6\sqrt{3})^2 + (12 - 6\sqrt{3})^2 $
Expanding both squares:
$ (144 + 144\sqrt{3} + 108) + (144 - 144\sqrt{3} + 108) = 144 + 108 + 144 + 108 = \boxed{504}.\quad \textcolor{green}{Correct} $
\end{responsebox}

\begin{responsebox}
\textbf{BASE Response}:
First, since the trapezoid is isosceles and has an inscribed circle, it must satisfy the condition that the sum of the lengths of the two parallel sides (bases) is equal to the sum of the lengths of the two non-parallel sides (legs).
Let the lengths of the parallel sides be \( r \) and \( s \), and the lengths of the legs be \( l \). Therefore, \( r + s = 2l \).
Given that the radius of the inscribed circle is \( 3 \), the height \( h \) of the trapezoid is \( 2 \times 3 = 6 \).
The area of the trapezoid is given by \( \frac{(r + s)}{2} \times h = 72 \). Substituting \( h = 6 \), we have:
\[
\frac{(r + s)}{2} \times 6 = 72 \implies r + s = 24
\]
Since the trapezoid is isosceles and has an inscribed circle, the legs are equal in length, and we have:
\[
l = \frac{r + s}{2} = 12
\]
Using the Pythagorean theorem with one of the legs and the difference of the bases:
\[
l^2 = \left( \frac{r - s}{2} \right)^2 + h^2 \implies 12^2 = \left( \frac{r - s}{2} \right)^2 + 6^2 \implies 144 = \left( \frac{r - s}{2} \right)^2 + 36
\]
\[
\left( \frac{r - s}{2} \right)^2 = 108 \implies \left( r - s \right)^2 = 432
\]
Finally, calculating \( r^2 + s^2 \):
\[
(r + s)^2 = r^2 + 2rs + s^2 = 576 \implies r^2 + s^2 = 576 - 2 \times 72 = \boxed{432}\quad \textcolor{red}{Wrong}
\]
\end{responsebox}

\begin{responsebox}
\textbf{SFT Response}:
Given an isosceles trapezoid with an inscribed circle, the radius of the circle is $3$, and the area of the trapezoid is $72$. We need to find $r^2+s^2$, where $r$ and $s$ are the lengths of the two parallel sides, with $r \ne s$. First, since the circle is tangent to both parallel sides of the trapezoid, the distance between those two sides, which is the height of the trapezoid, must be twice the radius. Therefore, the height is $h=2\cdot 3=6$. Next, we use the area formula for a trapezoid: $\text{Area}=\frac12(r+s)h$. Substituting the given area and the height, we get $72=\frac12(r+s)\cdot 6$, which simplifies to $72=3(r+s)$, so $r+s=24$. At this point, we have one useful equation. Now we use the fact that the trapezoid has an inscribed circle. For a tangential trapezoid, the sum of the two bases equals the length of each leg, so we write $r+s=l$. Since we already found that $r+s=24$, it follows that the leg length is $l=24$. This gives us another quantity to work with. Because the trapezoid is isosceles, if we drop perpendiculars from the endpoints of the shorter base to the longer base, we form two congruent right triangles on the sides. If we let $s$ be the longer base and $r$ be the shorter base, then the horizontal leg of each right triangle is $\frac{s-r}{2}$, the vertical leg is the height $6$, and the hypotenuse is the leg of the trapezoid, which has length $24$. By the Pythagorean theorem, we have $l=\sqrt{\left(\frac{s-r}{2}\right)^2+6^2}$. Substituting $l=24$, we obtain $24=\sqrt{\left(\frac{s-r}{2}\right)^2+36}$. Squaring both sides gives $576=\left(\frac{s-r}{2}\right)^2+36$. Subtracting $36$ from both sides, we get $\left(\frac{s-r}{2}\right)^2=540$. Taking square roots, we find $\frac{s-r}{2}=\sqrt{540}$. Since $540=36\cdot 15$, this becomes $\frac{s-r}{2}=6\sqrt{15}$, and therefore $s-r=12\sqrt{15}$. Now we have the system $r+s=24$ and $s-r=12\sqrt{15}$. Adding the two equations gives $(r+s)+(s-r)=24+12\sqrt{15}$, so $2s=24+12\sqrt{15}$, which means $s=12+6\sqrt{15}$. Subtracting the second equation from the first gives $(r+s)-(s-r)=24-12\sqrt{15}$, so $2r=24-12\sqrt{15}$, and thus $r=12-6\sqrt{15}$. Finally, we compute $r^2+s^2$. Instead of squaring each expression separately, we use the identity $(r+s)^2+(s-r)^2=2(r^2+s^2)$. Therefore, $r^2+s^2=\frac{(r+s)^2+(s-r)^2}{2}$. Substituting the values we found, we get $r^2+s^2=\frac{24^2+(12\sqrt{15})^2}{2}=\frac{576+2160}{2}=\frac{2736}{2}=1368$. Hence, the value of $r^2+s^2$ is $\boxed{1368}$.\textcolor{red}{Wrong}
\end{responsebox}

\vspace{6pt}
\subsection{Qwen2.5-Math-7B Model on Olympiad Dataset}
\begin{questionbox} [Input prompt: \\
$<|im \ start|>$ System. \\
Please reason step by step, and put your final answer within boxed $\brace{}$ $<|im \ end|>$.  \\
$<|im \ start |>$ User. \\
\textcolor{cyan}{"Question"} $<|im \ end|>$. \\
$<|im \ start|>$ Assistant ]
\textbf{Question}: 
For a positive integer $a$, define a sequence of integers \(\ x_{1}, x_{2}, \ldots\) by letting $x_{1}=a$ and $x_{n+1}=2 x_{n}+1$ for \(\ n \geq 1\). Let $y_{n}=2^{x_{n}}-1$. Determine the largest possible $k$ such that, for some positive integer $a$, the numbers \(\ y_{1}, \ldots, y_{k}\) are all prime.

\textbf{Ground-truth Answer}: 2
\end{questionbox}

\begin{responsebox}
\textbf{POPO Response}:
To determine the largest possible \( k \) such that, for some positive integer \( a \), the numbers \( y_1, y_2, \ldots, y_k \) are all prime, we start by analyzing the given sequence and the properties of the numbers involved.
Given:
- \( x_1 = a \) (where \( a \) is a positive integer)
- \( x_{n+1} = 2x_n + 1 \) for \( n \geq 1 \)
- \( y_n = 2^{x_n} - 1 \)
We need to find the largest \( k \) such that all \( y_1, y_2, \ldots, y_k \) are prime numbers.
First, let's compute the first few terms of the sequence \( \{x_n\} \) and corresponding \( y_n \):
1. \textbf{For \( n = 1 \):}
   - \( x_1 = a \)
   - \( y_1 = 2^{a} - 1 \)
   - \( y_1 \) must be prime. This implies that \( a \) itself must be prime since \( 2^a - 1 \) is prime only if \( a \) is prime (otherwise, \( 2^a - 1 \) would be composite).
2. \textbf{For \( n = 2 \):}
   - \( x_2 = 2x_1 + 1 = 2a + 1 \)
   - \( y_2 = 2^{2a+1} - 1 \)
   - \( y_2 \) must also be prime.
3. \textbf{For \( n = 3 \):}
   - \( x_3 = 2x_2 + 1 = 2(2a + 1) + 1 = 4a + 3 \)
   - \( y_3 = 2^{4a+3} - 1 \)
   - \( y_3 \) must be prime.
4. \textbf{For \( n = 4 \):}
   - \( x_4 = 2x_3 + 1 = 2(4a + 3) + 1 = 8a + 7 \)
   - \( y_4 = 2^{8a+7} - 1 \)
   - \( y_4 \) must be prime.
Continuing this pattern, each subsequent \( y_n \) becomes exponentially larger. The challenge is to find an \( a \) such that all \( y_1 \) through \( y_k \) are prime.
To maximize \( k \), we should start with the smallest possible prime \( a \), which is \( a = 2 \):
- \( y_1 = 2^2 - 1 = 3 \) (prime)
- \( y_2 = 2^{5} - 1 = 31 \) (prime)
- \( y_3 = 2^{11} - 1 = 2047 \) (not prime, since \( 2047 = 23 \times 89 \))
Thus, with \( a = 2 \), we only get \( k = 2 \).
Trying \( a = 3 \):

- \( y_1 = 2^3 - 1 = 7 \) (prime)
- \( y_2 = 2^{7} - 1 = 127 \) (prime)
- \( y_3 = 2^{15} - 1 = 32767 \) (not prime, since \( 32767 = 7 \times 31 \times 151 \))
Again, \( k = 2 \).
Trying \( a = 5 \):
- \( y_1 = 2^5 - 1 = 31 \) (prime)
- \( y_2 = 2^{11} - 1 = 2047 \) (not prime)
Thus, \( k = 1 \).
It appears that regardless of the choice of \( a \), achieving \( k \geq 3 \) is not possible because \( y_3 \) tends to be composite. Therefore, the maximum value of \( k \) for which all \( y_1 \) through \( y_k \) are prime is \( k = 2 \).
The largest possible \( k \) such that \( y_1, \ldots, y_k \) are all prime is \( \boxed{2} \).\quad \textcolor{green}{Correct}
\end{responsebox}

\begin{responsebox}
\textbf{GRPO Response}:
To solve this problem, we need to analyze the sequence \( x_n \) and the corresponding sequence \( y_n \).
1. First, let's define the sequence \( x_n \):
   - \( x_1 = a \)
   - \( x_{n+1} = 2x_n + 1 \)
2. From this, we can derive the general form of \( x_n \):
   - \( x_2 = 2x_1 + 1 = 2a + 1 \)
   - \( x_3 = 2x_2 + 1 = 2(2a + 1) + 1 = 4a + 3 \)
   - \( x_4 = 2x_3 + 1 = 2(4a + 3) + 1 = 8a + 7 \)
   - By induction, we can see that \( x_n = 2^{n-1}a + 2^{n-1} - 1 = 2^{n-1}(a + 1) - 1 \).
3. Next, we define \( y_n \):
   - \( y_n = 2^{x_n} - 1 \)
   - Substituting \( x_n \) into \( y_n \), we get \( y_n = 2^{2^{n-1}(a + 1) - 1} - 1 \).
4. To find the largest possible \( k \) such that \( y_1, y_2, \ldots, y_k \) are all prime, we need to check the primality of the first few terms of \( y_n \).
5. Let's start with \( y_1 \):
   - \( y_1 = 2^a - 1 \)
   - For \( y_1 \) to be prime, \( a \) must be such that \( 2^a - 1 \) is prime. The smallest such \( a \) is 2, so \( y_1 = 2^2 - 1 = 3 \), which is prime.
6. Next, let's check \( y_2 \):
   - \( y_2 = 2^{2a + 1} - 1 \)
   - For \( y_2 \) to be prime, \( 2a + 1 \) must be a prime exponent in the Mersenne prime form. If \( a = 2 \), then \( y_2 = 2^{2 \cdot 2 + 1} - 1 = 2^5 - 1 = 31 \), which is prime.
7. Next, let's check \( y_3 \):
   - \( y_3 = 2^{4a + 3} - 1 \)
   - For \( y_3 \) to be prime, \( 4a + 3 \) must be a prime exponent in the Mersenne prime form. If \( a = 2 \), then \( y_3 = 2^{4 \cdot 2 + 3} - 1 = 2^{11} - 1 = 2047 = 23 \cdot 89 \), which is not prime.
So, the largest possible \( k \) such that \( y_1, y_2, \ldots, y_k \) are all prime is 2.
Therefore, the answer is \( \boxed{2} \).\quad \textcolor{green}{Correct}
\end{responsebox}

\subsection{Qwen-Math-1.5B Model on the MATH-500 Dataset}
\begin{questionbox} [Input prompt: \\
$<|im \ start|>$ System. \\
Please reason step by step, and put your final answer within boxed $\brace{}$ $<|im \ end|>$.  \\
$<|im \ start |>$ User. \\
\textcolor{cyan}{"Question"} $<|im \ end|>$. \\
$<|im \ start|>$ Assistant ]
\textbf{Question}: 
How many positive whole-number divisors does 196 have?

\textbf{Ground-truth Answer}: 9
\end{questionbox}

\begin{responsebox}
\textbf{POPO Response}:
To find how many positive whole-number divisors $ 196 $ has, we first factor it into primes.
We have:
$ 196 = 14^2 = (2 \times 7)^2 = 2^2 \times 7^2 $
Now, use the divisor-counting formula:
If a number has prime factorization $ p^a q^b $, then the number of positive divisors is:
$ (a+1)(b+1) $
Here, $ a = 2 $ and $ b = 2 $, so the number of positive divisors is:
$ (2+1)(2+1) = 3 \times 3 = 9 $
Therefore, the number of positive whole-number divisors of $ 196 $ is $\boxed{9}.\quad \textcolor{green}{Correct}$
\end{responsebox}

\subsection{Deepseek-Math-7B Model on AMC 23 dataset}
\begin{questionbox} [ Input prompt: \\ 
$<|im \ start|>$ System. \\ 
You are a helpful assistant. Please solve the following math problem step by step and put your final answer within $<|im \ end|>$. \\ 
$<|im \ start|>$ User. \\ 
\textcolor{cyan}{"Question"} $<|im \ end|>$. \\ 
$<|im \ start|>$ Assistant.
]
\textbf{Question}: 
How many complex numbers satisfy the equation $z^5=\overline{z}$, where $\overline{z}$ is the conjugate of the complex number $z$?

\textbf{Ground-truth Answer:} 7
\end{questionbox}

\begin{responsebox}
\textbf{POPO Response}:
We want to find the number of complex numbers $ z $ satisfying
$ z^5=\overline{z} $,
where $ \overline{z} $ denotes the complex conjugate of $ z $.
To analyze this equation, write the complex number $ z $ in polar form:
$ z=re^{i\theta} $,
where $ r \ge 0 $ and $ \theta $ is a real angle.
Then:
$ z^5 = r^5 e^{i5\theta} $
and
$ \overline{z} = r e^{-i\theta} $
So the equation $ z^5=\overline{z} $ becomes:
$ r^5 e^{i5\theta} = r e^{-i\theta} $
Now we consider two cases.
First, if $ r=0 $, then $ z=0 $, and indeed:
$ z^5=0=\overline{z} $
So $ z=0 $ is one solution.
Now suppose $ r>0 $. Then we may divide both sides of
$ r^5 e^{i5\theta} = r e^{-i\theta} $
by $ r e^{-i\theta} $, giving:
$ r^4 e^{i6\theta} = 1 $
For a complex number to equal $ 1 $, its magnitude must be $ 1 $ and its argument must be a multiple of $ 2\pi $. Therefore, we must have:
$ r^4=1 $
and
$ e^{i6\theta}=1 $
Since $ r \ge 0 $, the equation $ r^4=1 $ implies:
$ r=1 $
And from
$ e^{i6\theta}=1 $,
we get:
$ 6\theta=2\pi k $
for some integer $ k $.
Thus:
$ \theta=\frac{2\pi k}{6}=\frac{\pi k}{3} $
As $ k=0,1,2,3,4,5 $, this gives the $ 6 $ distinct solutions on the unit circle:
$ e^{0i}, e^{i\pi/3}, e^{i2\pi/3}, e^{i\pi}, e^{i4\pi/3}, e^{i5\pi/3} $
So there are:
$ 6 $
nonzero solutions, together with the zero solution.
Hence, the total number of complex numbers satisfying the equation is:
$ 6+1=7 $
Therefore, the answer is $\boxed{7}.\quad \textcolor{green}{Correct}$
\end{responsebox}

\vspace{6pt}

\end{document}